\theoremstyle{plain}
\newtheorem{theorem}{Theorem}[section]
\newtheorem{proposition}[theorem]{Proposition}
\newtheorem{corollary}[theorem]{Corollary}
\theoremstyle{definition}
\newtheorem{assumption}[theorem]{Assumption}
\theoremstyle{remark}
\newtheorem{remark}[theorem]{Remark}
\DeclareMathOperator{\softmax}{SoftMax}
\def\se{\sigma_e}
\def\negmathsp{\ensuremath{\!}}
\def\LCFM{\ensuremath{L_{\text{CFM}}}\xspace}
\def\Lour{\ensuremath{L_{\text{ExFM}}}\xspace}
\def\deftxt#1{\emph{#1}}
\def\ND#1#2#3{\ensuremath{%
\mathcal N(x\:|\:#2,#3I)
}}
\def\ie{\emph{i.\,e.}\xspace}
\def\eg{\emph{e.\,g.}\xspace}
\def\model(#1,#2){\ensuremath{%
v_\theta(#1,\,#2)
}}
\def\phicexpow^#1#2#3{
    \phicexusualcommon{\phi^{#1}}{#2}{#3}}
\def\phicexusual#1#2{
    \phicexusualcommon\phi{#1}{#2}}
\def\phicexusualcommon#1#2#3{
    #1_{#2,#3}\@ifnextchar(\negmathsp{}}
\def\phicex{\@ifnextchar^\phicexpow\phicexusual}
\def\phic{\phicex t{x_1}}
\def\ox{\ensuremath{%
\overline x_1
}}
\def\rhom{\rho_m}
\def\comm#1{\textit{// #1}}
\def\nosqrt{\gdef\is@sqrt{\relax}}
\gdef\is@sqrt{^2}
\def\normaldx#1#2#3{\mathcal N\left(#3\middle\vert\, #1, #2\is@sqrt\right)\gdef\is@sqrt{^2}}
\def\normald#1#2{\normaldx{#1}{#2}\cdot}
\DeclareMathOperator{\divtrue}{div}
\def\myfrac#1#2{\left(#1\middle)\middle/#2\right.}
\def\myfracinv#1#2{#1\left/\middle(#2\right)}
\title{Explicit Flow Matching: \\
On The Theory of Flow Matching Algorithms with Applications}
\author{%
  Gleb Ryzhakov \\
  CAIT,
  Skolkovo Institute of Science and Technology  \\
  Bolshoy Boulevard, 30, p.1, Moscow 121205, Russia \\
  \texttt{G.Ryzhakov@skol.tech} \\
   \And
  Svetlana Pavlova \\
  CAIT, 
  Skolkovo Institute of Science and Technology  \\
  Bolshoy Boulevard, 30, p.1, Moscow 121205, Russia \\
  \texttt{Svetlana.Pavlova@skol.tech} \\
     \And
  Egor Sevriugov \\
  CAIT,
  Skolkovo Institute of Science and Technology  \\
  Bolshoy Boulevard, 30, p.1, Moscow 121205, Russia \\
  \texttt{Egor.Sevriugov@skol.tech} \\
    \And
  Ivan Oseledets \\
  AIRI\\
  p. 19, Nizhny Susalny per. 5, Moscow, 105064, Russia\\
  and \\ CAIT, 
  Skolkovo Institute of Science and Technology  \\
  Bolshoy Boulevard, 30, p.1, Moscow 121205, Russia \\
  \texttt{I.Oseledets@skol.tech} \\
}
\begin{document}

\pdfstringdefDisableCommands{%
  \def\\{}%
  \def\texttt#1{<#1>}%
}

\maketitle

\begin{abstract}
This paper proposes a novel method, Explicit Flow Matching (ExFM), for training and analyzing flow-based generative models. ExFM leverages a theoretically grounded loss function, ExFM loss (a tractable form of Flow Matching (FM) loss), to demonstrably reduce variance during training, leading to faster convergence and more stable learning. Based on theoretical analysis of these formulas, we derived exact expressions for the vector field (and score in stochastic cases) for model examples (in particular, for separating multiple exponents), and in some simple cases, exact solutions for trajectories. In addition, we also investigated simple cases of diffusion generative models by adding a stochastic term and obtained an explicit form of the expression for score. While the paper emphasizes the theoretical underpinnings of ExFM, it also showcases its effectiveness through numerical experiments on various datasets, including high-dimensional ones. Compared to traditional FM methods, ExFM achieves superior performance in terms of both learning speed and final outcomes.
\end{abstract}

\section{Introduction}
In recent years, there has been a remarkable surge in Deep Learning, wherein the advancements have transitioned from purely neural networks to tackling differential equations. Notably, Diffusion Models~\cite{sohldickstein2015deep}
have emerged as key players in this field. 
These models 
transform a simple initial distribution, usually a standard Gaussian distribution, into a target distribution via a solution of Stochastic Differentiable Equation (SDE) \cite{albergo_stochastic_2023} or Ordinary Differentiable Equation (ODE)\cite{albergo_building_2023} with right-hand side 
representing a trained neural network. 
The Conditional Flow Matching (CFM)~\cite{lipman2023flow} technique, which we focus on in our research,
is a promising approach for constructing probability distributions using conditional probability paths, 
which  is notably a robust and stable alternative for training Diffusion Models.
The development of the CFM-based approach includes various techniques and heuristics \cite{chen2023riemannian,
jolicoeurmartineau2023generating, pooladian2023multisample} aimed at improving convergence or quality of learning or inference.
For example, in the works 
\cite{tong2024improving,
Tong2024,
liu2022flow}
it was proposed to straighten the trajectories between points by different methods,
which led to serious modifications of the learning process.
We refer the reader for, example, to the paper~\cite{Tong2024} where different FM-based approaches are summarised,
and to the paper~\cite{lipman2023flow} for the connection between Diffusion Models and CFM. 

In our work, 
we introduced an approach 
which we called 
Explicit Flow Matching (ExFM),
to consider the Flow Matching framework theoretically 
by modifying the loss and writing the explicit value of the vector field.
Strictly speaking, the presented loss is a tractable form of the FM loss, see Eq.~(5) of~\cite{lipman2023flow}.
Base on this methods
we can improve the convergence of the method in practical examples reducing the variance of the loss, 
but the main focus of our paper is on theoretical derivations.

Our method allows us to write an expression for the vector field in closed form for quite simple cases (Gaussian distributions), however, we note that
Diffusion Models framework 
in the case of a Gaussian Mixture of two Gaussian as a target distribution
is still under investigation, see recent publications~\cite{shah2023learning,NEURIPS2023_06abed94}.


Our main contributions are:
\begin{enumerate}
\item A tractable form of the FM loss is presented, 
which reaches a minimum on the same function as the loss used in Conditional Flow Matching, but has a smaller variance;
\item 
The explicit expression in integral form for the vector field delivering the minimum to this loss (therefore for Flow Matching loss) is presented.
\item 
As a consequence, we derive expressions for the flow matching vector field and score in several particular cases (when linear conditional mapping is used, normal distribution, etc.);
\item Analytical analysis of SGD convergence showed that our formula have better training variance on several cases;
\item 
Numerical experiments show that we can achieve better learning results in fewer steps.
\end{enumerate}

\subsection{Preliminaries}
Flow matching is well known method for finding a flow to connect samples
from two
distribution with densities~$\rho_0$
and~$\rho_1$.
It is done by solving continuity equation with respect to the
time dependent vector field~$\overline v(x,t)$ and time-dependent density $\rho(x,t)$
with boundary conditions:
\begin{equation}
\left\{
\begin{aligned}
    \pdv{\rho(x,t)}{t}
    &= -\divtrue(\rho(x,t) \overline v(x,t)), \\
    \rho(x,0) &= \rho_0(x), \quad
    \rho(x,1) = \rho_1(x).
\end{aligned}
\right.
\label{eq:FP}
\end{equation}
Function $\rho(x,t)$ is called \deftxt{probability density path}.
Typically,
the distribution~$\rho_0$ is known and
it is chosen for convenience reasons,
for example, as standard normal distribution~$\rho(x)=\ND x0{}$.
The distribution~$\rho_1$ is unknown and we only know the set of samples from it,
so the problem is to approximate the vector field $v(x,t)\approx\overline v(x,t)$ using these samples.
To make problem~\eqref{eq:FP} well defined, 
one usually imposes additional regularity conditions on the densities,
such as smoothness.
The rigorous justification of the obtained results we put in the Appendix, 
leaving the general formulations of theorems and ideas in the main text.

From a given vector field, we can construct a \deftxt{flow}~$\phi_t$, \ie, a time-dependent map,
satisfying the ODE $
\pdv{\phi_t(x)}{t} =  v(\phi_t(x),t)
$
with initial condition 
$
\phi_0(x) = x
$.
Thus,
one can sample a point~$x_0$ from the distribution~$\rho_0$
and then using this ODE obtain a point $x_1=\phi_1(x_0)$
which  have a distribution approximately equal to~$\rho_1$.
For given boundary $\rho_0$ and $\rho_1$,
the vector field or path solutions are not the only solutions,
but if we have found any solution, it will already allow us to sample from the unknown density $rho_1$. 
However, if the problem is more narrowly defined, \eg,
one needs to have a map that is close to the Optimal Transport (OT) map, 
we have to impose additional constraints. 

The problem of finding any vector field~$v$ 
is solved in conditional manner in the paper~\cite{lipman2023flow},
where so-called Conditional Flow Matching (CFM) is present.
Namely, the following loss function was introduced 
for the training a model~$v_\theta$
which depends on parameters~$\theta$
\begin{equation}
    \label{eq:LCFM}
    \LCFM(\theta)=
    \mathbb E_t
    \mathbb E_{x_1,x_0}
    \norm{
    \model(\phic(x_0),t)-
    \phic'(x_0)
    }^2,
\end{equation}
where $\phic(x_0)$ is some flow, conditioned on $x_1$ 
(one can take $\phic(x_0)=(1-t)x_0+tx_1+\sigma_stx_0$ in the simplest case, where $\sigma_s>0$ is a small parameter need for this map to be invertable at any $0\leq t\leq1$).
Hereinafter the dash indicates the time derivative.
Time variable $t$ is uniformly distributed: $t\sim\mathcal U[0,1]$
and 
random variables $x_0$ and $x_1$ are distributed according to the initial and final distributions, respectively:
$x_0\sim\rho_0$, $x_1\sim\rho_1$.
Below we  omit specifying of the symbol~$\mathbb E$ the distribution by which the expectation is taken
where it does not lead to ambiguity.
\begin{figure}[!tb]
\centering
\label{fig:main_scheme}
\subfigure{\label{fig:diff}\includegraphics[clip,width=0.5\linewidth]{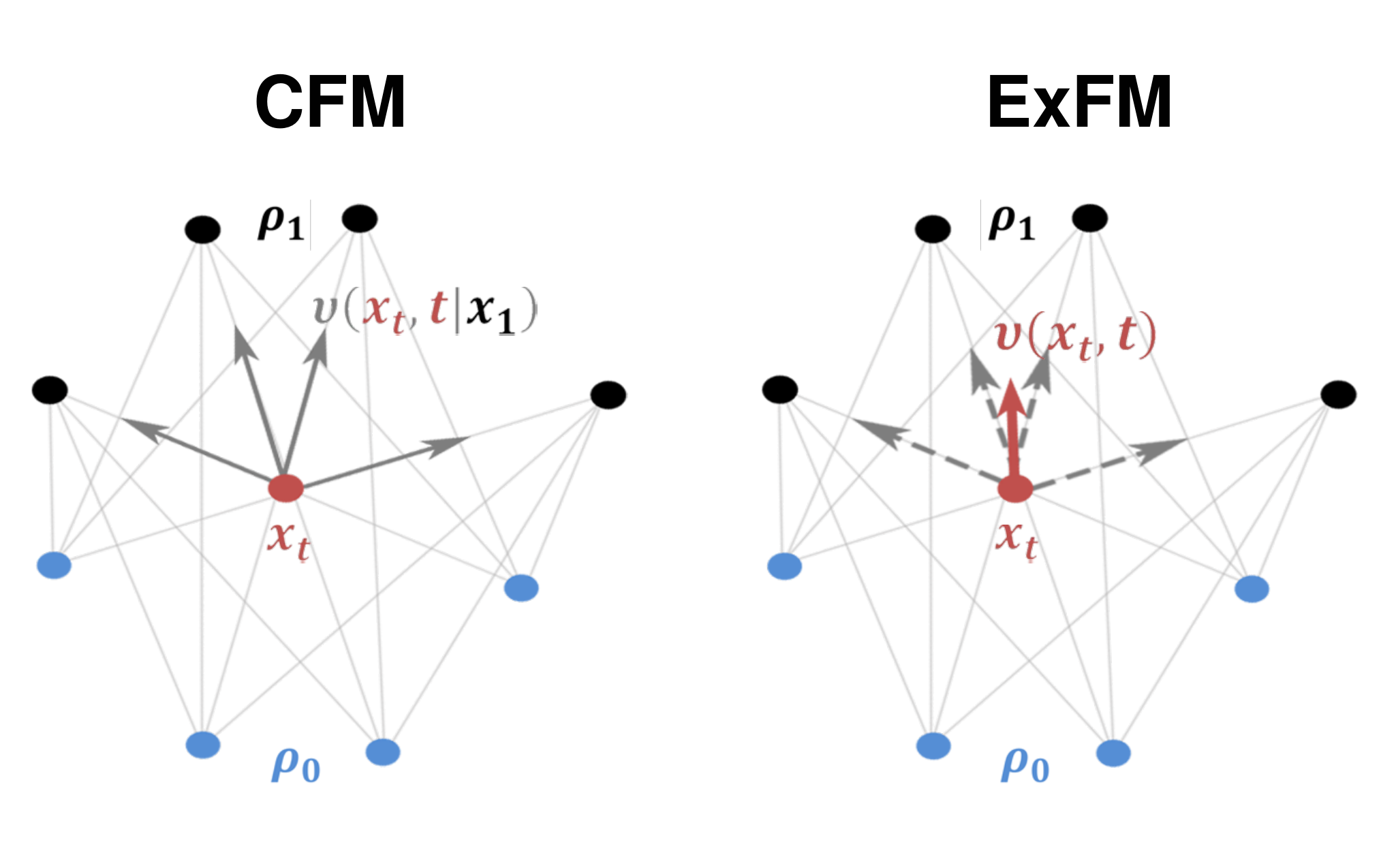}}
\subfigure{\label{fig:CFM_ExFM_disps}\includegraphics[clip,width=0.4\linewidth]{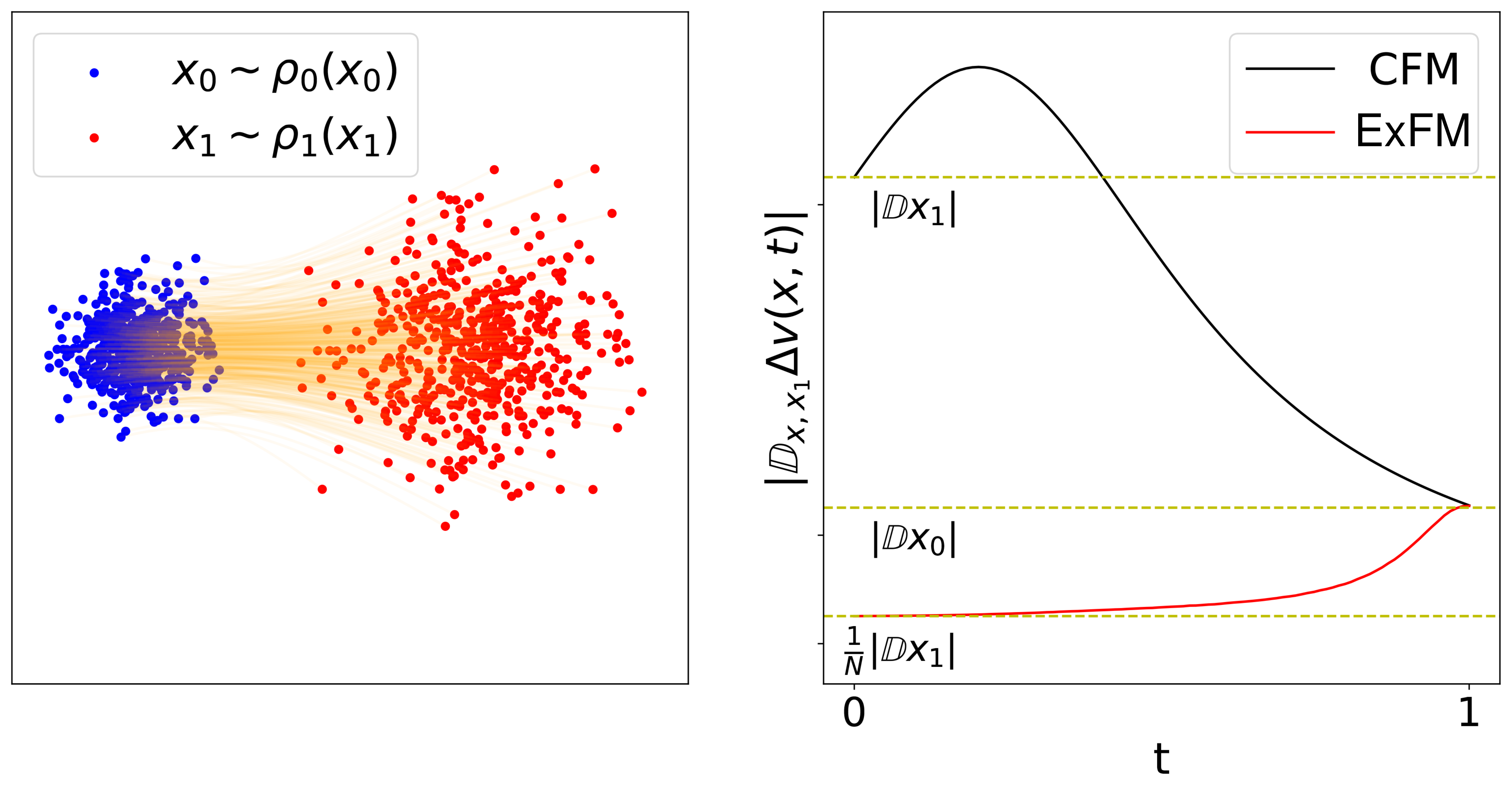}}
\vspace*{-2ex}
\caption{%
(Left) 
The key novelty of our approach is that in classical CFM, highly divergent directions can appear
    in a small spatial area at similar times (left part).
    In our approach (right part) we average over these vectors, 
    training the model on a smoothed unnoised vector field.
(Right)
The comparison evaluated dispersion norm 
over time parameter~$t$
for CFM and ExFM in matching standard Gaussian $\rho_0 = \mathcal N(0,I)$ to general Gaussian $\rho_1 =  \mathcal  N(\mu,\sigma^2I)$ distributions. The y-axis represents the sum of dispersion vector components, denoted as $|\mathbb D_{x,x_1}\Delta v(x,t)|$.
The left panel illustrates samples drawn from the $\rho_0$ and $\rho_1$ distributions, 
as well as the corresponding flows. The right panel depicts the dispersion trend over time for both CFM (black line) and ExFM (red line) objectives. The dotted lines correspond to the dispersion levels (in top-down order $|\mathbb D x_1|$, $|\mathbb D x_0|$, $|\mathbb D x_1| / N$.
}
\end{figure}


\subsection{Why new method?}
Model training using loss~\eqref{eq:LCFM}
have the following disadvantage:
during training, due to the randomness of~$x_0$ and~$x_1$,
significantly different values can be presented for model as output value at close model argument values~$(x_t,t)$.
Indeed, a fixed point~$x_t=\phic(x_0)$
can be obtained by an infinite set of~$x_0$ and~$x_1$ pairs,
some of which are directly opposite,
and at least for small times~$t$
the probability of these different directions may not be significantly different.
At the same time, data~$\phic'(x_0)$
on which the model learns
significantly different for such different positions of pairs~$x_0$ and~$x_1$.
Thus, the model is forced to do two functions during training: generalize and take the mathematical expectation (clean the data from noise).

In our approach, see Fig.~\ref{fig:diff},
we feed the model input with cleaned data with small variance.
Thus, the model only needs to generalize the data,
which happens much faster (in fewer training steps).

Moreover,
in the process of constructing the modified loss,
we have developed the exact formula for the vector field,
see Eq.~\eqref{eq:vtrue}, \eqref{eq:vtrue:pi}.
The existence of an explicit formula for the vector field 
is of great importance not only from a theoretical but also from a practical point of view.


\section{Main idea}
\subsection{Modified objective}
Lets expand the last two mathematical expectations in the loss~\eqref{eq:LCFM}
and substitute variables using map~$\phic$,
passing from the point~$x_0$ to its position~$x_t=\phic(x_0)$ at time~$t$:
    \begin{multline}
        \label{eq:LCMour_1}
        \!\!
        \mathbb E_{x_1,x_0}
        \norm{
        \model(\phic(x_0),t)-
        \phic'(x_0)
    }^2\!=
        \!\!\iint
        \norm{
        \model(\phic(x_0),t)-
        \phic'(x_0)
    }^2\!\!
    \rho_0(x_0)
    \rho_1(x_1)
    \dd x_0\dd x_1
    \\
        \!=\!\!
        \iint
        \norm{
        \model(x_t,t)-
        \phic'\bigl(\phic^{-1}(x_t)\bigr)
    }^2
    \underbrace{\det[\eval{\pdv*{\phic^{-1}(x)}{x}}_{x=x_t}]
    \rho_0\bigl(\phic^{-1}(x_t)\bigr)}_{\rho_{x_1}(x_t,t)}
    \rho_1(x_1)
    \dd {x_t}\dd x_1
    \\
    =
        \mathbb E_{x_1,x_t\sim\rho_{x_1}(\cdot,t)}
        \norm{
        \model(x_t,t)-
        \phic'\bigl(\phic^{-1}(x_t)\bigr)
    }^2
    .
    \end{multline}
We assume, that the map~$\phic$ is invertable at each~$0<t<1$,
\ie that~$\phic^{-1}(x_t)$ exits on this time interval and for all~$x_t=\{\phi_t(x_0)\mid\forall x_0:\rho(x_0)>0\}$.
Eq.~\eqref{eq:LCMour_1}
can be seen as a transition from expectation on the variable~$x_0\sim\rho_0$ to expectation on the variable
$x_t\sim\rho_{x_1}(\cdot,t)$,
where 
$
\rho_{x_1}(x,t)=[\phic]_*\rho_0(x):=\rho_0\bigl(\phic^{-1}(x)\bigr)\det[\pdv*{\phic^{-1}(x)}{x}]
$.
See paper \cite{Chen2018NODE} for details about
the push-forward operator ``*''.
Our representation~\eqref{eq:LCMour_1} is very similar to expression~(9) of the cited paper~\cite{lipman2023flow},
only we write it in terms of the conditional flow rather than the conditional vector field.

\def\xt{x_t}
To obtain the modified loss,
we return to end of the standard CFM loss representation in~\eqref{eq:LCMour_1}.
It is written as the expectation over two random variables~$x_1$ and~$\xt$
having a common distribution density
\begin{equation}
    \label{eq:rho_jount}
    \{x_1,\xt\}\sim\rho_j(x_1,\xt,t)=
    \rho_{x_1}(\xt,t)
    \rho_1(x_1),
\end{equation}
which, generally speaking, is not factorizable.
Let us rewrite this expectations in terms of two independent random variables,
each of which have its marginal distribution.
The marginal distribution~$\rhom$ of~$\xt$
can be obtained via integration:
\begin{equation}
    \label{eq:def_rhom}
    \rhom(\xt,t)=
    \int
        \rho_{j}(x_1,\xt,t)
    \dd {x_1}
    =
    \int
        \rho_{x_1}(\xt,t)
    \rho_1(x_1)
    \dd {x_1},
\end{equation}
while the marginal distribution of~$x_1$ is just (unknown) function~$\rho_1$. %
Let for convenience $w(t,x_1,x)=\phic'\bigl(\phic^{-1}(x)\bigr)$\footnote{
Note, that $w(t,x_1,x)$ is the conditional velocity at the given point~$x$.}.
We have
\begin{multline}
    \LCFM(\theta)=
    \mathbb E_{t,x_1,\xt\sim\rho_{x_1}(\cdot,t)}
    \norm{
        \model(\xt,t)-
        w(t,x_1,\xt)
    }^2
    =\\
    \int_0^1
        \iint
        \norm{
        \model(\xt,t)-
        w(t,x_1,\xt)
    }^2
    \rho_{x_1}(x,t)
    \rho_1(x_1)
    \dd {\xt}\dd x_1\dd t
    =\\
    \int_0^1
        \iint
        \norm{
        \model(\xt,t)-
        w(t,x_1,\xt)
    }^2
    \,
    \left(\nicefrac{\rho_{x_1}(\xt,t)}{\rho_m(\xt,t)}\right)
    \rho_m(\xt,t)
    \rho_1(x_1)
    \dd {\xt}\dd x_1\dd t
    =\\
    \def\xt{x}
    \mathbb E_{t,x_1,\xt\sim\rho_m(\cdot,t)}
    \norm{
        \model(\xt,t)-
        w(t,x_1,\xt)
    }^2
    \,
    \rho_{c}(x|x_1,t)/\rho_1(x_1)
    ,
    \label{eq:LCM_ind}
\end{multline}
where we introduce a  conditional distribution
\def\xt{x}
\begin{equation}
    \rho_{c}(x|x_1,t)
    := 
    \nicefrac[\displaystyle]{\rho_{x_1}(x,t)\rho_1(x_1)}{\rho_m(x,t)}
    :=
    {\rho_{x_1}(x,t)\rho_1(x_1)}
    \bigg/
    {
        \int
        \rho_{x_1}(\xt,t)
        \rho_1(x_1)
        \dd {x_1}
    }
    .
    \label{eq:cond_distr}
\end{equation}
The key feature
of the representation~\eqref{eq:LCM_ind}
is that 
the integration variables~$x_1$ and $\xt$ are independent.
Thus, we can evaluate them using Monte Carlo-like schemes in different ways.
However,
we go further and make a modification to this loss
to reduce the variance of Monte Carlo methods.

\subsection{New loss and exact expression for vector field}
Note that so far the expression for $\LCFM$ have not changed,
it has just been rewritten in different forms.
Now we change this expression
so that its numerical value,
generally speaking,
may be different,
but the derivative of the model parameters will be the same.
We introduce the following loss
\begin{multline}
    \label{eq:Lour}
    \Lour(\theta)=
    \mathbb E_t
    \mathbb E_{\xt\sim\rhom}
    \Bigl\|
    \model(\xt,t)-
    {}
    \mathbb E_{x_1\sim\rho_1}
    w(t,x_1,\xt)
    \rho_{c}(\xt|x_1,t)/\rho_1(x_1)
        \Bigl\|^2
        =
        \\
    \int_0^1
        \int
        \Bigl\|
        \model(\xt,t)-
        \int
        w(t,x_1,\xt)
    \times
    \rho_{c}(\xt|x_1,t)
    \dd {x_1}
    \Bigr\|^2
        \rhom(\xt,t)
    \dd {\xt}\dd t
    .
\end{multline}

\begin{theorem}
    \label{th:grads_eq}
    Losses
    $\LCFM$ in Eq.~\eqref{eq:LCFM}
    and
    $\Lour$ in Eq.~\eqref{eq:Lour}
    have the same derivative with respect to model parameters:
\begin{equation}
    \label{eq:grad}
    \dv*{\LCFM(\theta)}{\theta}
    =
    \dv*{\Lour(\theta)}{\theta}.
\end{equation}
\end{theorem}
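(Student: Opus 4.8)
The plan is to expand the squared norms in both objectives and show that every term carrying a dependence on $\theta$ is literally the same in $\LCFM$ and $\Lour$, so that their difference is a constant in $\theta$ and hence has zero derivative. Concretely, I will reduce $\LCFM$ from the form \eqref{eq:LCM_ind} back to $\int_0^1\iint \norm{\model(x,t)-w(t,x_1,x)}^2\rho_{x_1}(x,t)\rho_1(x_1)\,dx\,dx_1\,dt$, do the inner $x_1$-integration term by term after writing $\norm{a-b}^2=\norm a^2-2\langle a,b\rangle+\norm b^2$, and match the result against the same expansion applied to \eqref{eq:Lour}.

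First I would introduce the shorthand $\bar w(x,t):=\int w(t,x_1,x)\,\rho_{c}(x\mid x_1,t)\,dx_1$, which is exactly the vector appearing inside the norm of \eqref{eq:Lour} once the $\mathbb E_{x_1\sim\rho_1}$ and the factor $\rho_1(x_1)$ cancel. From the definition \eqref{eq:cond_distr} of $\rho_c$ together with \eqref{eq:def_rhom}, one records two elementary identities: $\int \rho_c(x\mid x_1,t)\,dx_1 = 1$ (so $\bar w$ is a genuine conditional expectation of $w$ given $(x_t,t)=(x,t)$), and $\bar w(x,t)\,\rho_m(x,t) = \int w(t,x_1,x)\,\rho_{x_1}(x,t)\,\rho_1(x_1)\,dx_1$.

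Then the computation is short. Expanding $\LCFM$ and integrating over $x_1$: the $\norm{\model(x,t)}^2$ piece gives $\norm{\model(x,t)}^2\rho_m(x,t)$ by \eqref{eq:def_rhom}; the cross term $-2\langle \model(x,t),w\rangle$ gives $-2\langle \model(x,t),\bar w(x,t)\rangle\rho_m(x,t)$ by the second identity above; and the $\norm{w}^2$ piece gives some $C(x,t)$ with no $\theta$ in it. Expanding $\Lour$ directly gives $\norm{\model(x,t)}^2\rho_m - 2\langle \model(x,t),\bar w(x,t)\rangle\rho_m + \norm{\bar w(x,t)}^2\rho_m$. Subtracting, the first two (the only $\theta$-dependent) terms cancel identically, leaving $\LCFM(\theta)-\Lour(\theta)=\int_0^1\!\int \bigl(C(x,t)-\norm{\bar w(x,t)}^2\rho_m(x,t)\bigr)\,dx\,dt$, which does not involve $\theta$; differentiating yields \eqref{eq:grad}.

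The only real obstacle is analytic rather than algebraic: one must justify interchanging $\dv*{}{\theta}$ with the integrals (dominated convergence, assuming $\model$ is smooth in $\theta$ with locally integrable majorants), apply Fubini to peel off the $x_1$-integration, and know that the term split off as $C(x,t)=\int\norm{w(t,x_1,x)}^2\rho_{x_1}(x,t)\rho_1(x_1)\,dx_1$ is finite almost everywhere so the rearrangement is legitimate. These are exactly the regularity conditions (smoothness of the densities and of the conditional flow, invertibility of $\phi_{t,x_1}$) deferred to the Appendix, under which all the above manipulations are valid. It is worth stating explicitly in the proof that what makes the argument work is precisely that the target inside the norm of $\Lour$ is the conditional mean $\bar w$; replacing it by any other function of $(x,t)$ would in general change the $\theta$-gradient.
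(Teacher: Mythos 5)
Your argument is correct and follows essentially the same route as the paper's proof: expand both losses via bilinearity of the squared norm, use the normalization identity $\int\rho_c(x\mid x_1,t)\,\dd{x_1}=1$ (equivalently $\mathbb E_{x_1}\rho_c/\rho_1=1$) to see that the $\norm{\model(x,t)}^2$ terms and the cross terms coincide, and conclude the two losses differ only by a $\theta$-independent constant. Your additional remarks on dominated convergence and Fubini simply make explicit the regularity assumptions the paper defers to its appendix, so there is nothing substantively different here.
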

Proof is in the Appendix~\ref{seq:App:diff}.

In the presented loss~\Lour,
the integration (outside the norm operator) 
proceeds on those variables on which the model depends, 
while inside this operator there are no other free variables.
Thus, using this kind of loss,
it is possible to find an exact analytical expression for the vector field
for which the minimum of this loss is zero (unlike the loss \LCFM).
Namely, we have
\begin{equation}
v(x,t)
=\int w(t,x_1,x)
\rho_{c}(x|x_1,t)
\dd{x_1}.
\label{eq:vtru_comm}
\end{equation}
We can obtain the exact form of this vector field
given the particular map~$\phic$.
For example, the following statement holds:
\begin{corollary}
Consider the linear conditioned flow
$
    \phic(x_0)=(1-t)x_0+tx_1
$
which is inevitable as $0\leq t<1$.
Then
$
w(t,x_1,x)=\frac{x_1-x}{1-t}
$,
$
\rho_{x_1}(x,t)
=
\rho_0\left(
    \frac{x-x_1 t}{1-t}
\right)
\frac1{(1-t)^d}
$
and the loss
$\Lour$ in Eq.~\eqref{eq:Lour}
reaches zero value when the model of the vector field
have the following analytical form
\begin{equation}
v(x,t)
=
\myfracinv{
\int
(x_1-x)
\rho_0\left(
\frac{x-x_1 t}{1-t}
\right)
\rho_1(x_1)
\dd{x_1}}{
( 1-t )
\int
\rho_0\left(
\frac{x-x_1 t}{1-t}
\right)
\rho_1(x_1)
\dd{x_1}
}{
}
.
\label{eq:vtrue}
\end{equation}
This is the exact value of the vector field
whose flow translates the given distribution~$\rho_0$ 
to~$\rho_1$.
\end{corollary}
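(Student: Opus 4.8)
The plan is to specialize the general exact-vector-field formula \eqref{eq:vtru_comm} to the linear conditional flow $\phic(x_0)=(1-t)x_0+tx_1$ by explicitly computing the three ingredients that appear in it: the conditional velocity $w(t,x_1,x)$, the pushed-forward density $\rho_{x_1}(x,t)$, and the conditional distribution $\rho_c(x\mid x_1,t)$ from \eqref{eq:cond_distr}. First I would invert the map: from $x_t=(1-t)x_0+tx_1$ we get $\phic^{-1}(x)=(x-tx_1)/(1-t)$, which is well-defined for $0\le t<1$. Differentiating $\phic(x_0)=(1-t)x_0+tx_1$ in $t$ gives $\phic'(x_0)=x_1-x_0$, and substituting $x_0=\phic^{-1}(x)=(x-tx_1)/(1-t)$ yields $w(t,x_1,x)=x_1-(x-tx_1)/(1-t)=(x_1-x)/(1-t)$, as claimed.

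Next I would compute the push-forward density. Since $\phic^{-1}(x)=(x-tx_1)/(1-t)$ is affine in $x$ with Jacobian $\pdv*{\phic^{-1}(x)}{x}=\frac{1}{1-t}I$, its determinant is $(1-t)^{-d}$, so by the change-of-variables formula recorded just below \eqref{eq:LCMour_1},
\[
\rho_{x_1}(x,t)=\rho_0\!\left(\frac{x-tx_1}{1-t}\right)\frac{1}{(1-t)^d}.
\]
Then I would plug $w$, $\rho_{x_1}$, and the definition of $\rho_c$ into \eqref{eq:vtru_comm}. Using \eqref{eq:cond_distr}, the combination $w(t,x_1,x)\,\rho_c(x\mid x_1,t)$ equals $\frac{x_1-x}{1-t}\cdot\frac{\rho_{x_1}(x,t)\rho_1(x_1)}{\rho_m(x,t)}$, and $\rho_m(x,t)=\int\rho_{x_1}(x,t)\rho_1(x_1)\dd x_1$ by \eqref{eq:def_rhom}. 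The common factors $(1-t)^{-d}$ in numerator and denominator cancel, leaving exactly \eqref{eq:vtrue}. The final sentence — that the flow of this $v$ transports $\rho_0$ to $\rho_1$ — follows because $v$ is precisely the minimizer of $\Lour$ (zero loss is achieved by construction, the integrand inside the norm being the model-free conditional expectation), and by Theorem~\ref{th:grads_eq} this is also a stationary point, hence minimizer, of $\LCFM$; the CFM theory of \cite{lipman2023flow} guarantees that this vector field generates the probability path interpolating $\rho_0$ and $\rho_1$.

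The routine part is the algebra above; the one point needing a little care is the boundary behaviour at $t\to 1$, where $1/(1-t)$ blows up and $\rho_{x_1}(x,t)$ degenerates to a Dirac mass at $x=x_1$ — so strictly speaking the formula \eqref{eq:vtrue} is the vector field on $0\le t<1$, and one should note (as the corollary does by writing $0\le t<1$) that the endpoint is handled as a limit, or by invoking the regularity assumptions deferred to the appendix. I expect the main obstacle, if any, to be making this endpoint limit and the interchange of integration/differentiation rigorous rather than the identification itself; but since the corollary only claims the closed form of the minimizer and the transport property, the substitution argument combined with Theorem~\ref{th:grads_eq} and the cited CFM result suffices, and the $t=1$ subtlety can be relegated to the appendix alongside the other regularity conditions.
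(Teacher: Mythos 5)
Your derivation of the closed form for $0\le t<1$ --- inverting the linear map, computing $w(t,x_1,x)=(x_1-x)/(1-t)$ and the push-forward density with Jacobian $(1-t)^{-d}$, and cancelling the common factor in \eqref{eq:vtru_comm} with \eqref{eq:cond_distr} and \eqref{eq:def_rhom} --- is correct and is the same algebra that underlies the paper's formula. Where you genuinely diverge from the paper is in how the endpoint $t=1$ and the transport claim are handled, and that is exactly where the paper's proof does its real work: instead of working with the map that degenerates at $t=1$, the appendix introduces the regularized flow $(1-t)x_0+tx_1+\sigma_s t x_0$, derives the analogous explicit field \eqref{eq:vtrue:sigma}, and then shows (under Assumption~\ref{ass:rho_1} together with boundedness of $\rho_0$ and a finite first moment of $\rho_1$) that the integrands are dominated by integrable functions uniformly in $\sigma_s$ and $t$, so the limit $\sigma_s\to0$ can be taken under the integrals; the same limit argument is what shows the marginal path at $t=1$ equals $\rho_1$, which is the content of the corollary's final sentence via Theorem~1 of \cite{lipman2023flow}. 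Your write-up defers precisely this point (``relegated to the appendix''), so as it stands it establishes the formula on $[0,1)$ but leaves the transport-to-$\rho_1$ claim resting on an unproved limit interchange. A smaller point: ``stationary point, hence minimizer of $\LCFM$'' does not follow from gradient equality alone; you need the slightly stronger fact, actually established inside the proof of Theorem~\ref{th:grads_eq}, that $\Lour$ and $\LCFM$ differ by a $\theta$-independent constant (or, equivalently, pointwise convexity of the loss in the field value). With the regularization-and-dominated-convergence step supplied, your argument coincides with the paper's; without it, the closed form is proved but the final transport statement is only sketched.
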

Complete proofs are in the Appendix~\ref{sec:App:expl}.
Note that the result~\eqref{eq:vtrue} is not totally new, 
for example, a similar result (though in the form of a general expression rather than an explicit formula), 
was given in~\cite{tong2024improving}, Eq.~(9).
However, our contribution consists of both the general form~\eqref{eq:vtru_comm}
and practical and theoretical conclusions from it (see below).

\begin{remark}
In the case of the initial and final times
$t=0,\,1$, 
Eq.~\eqref{eq:vtrue}
is noticeably simpler
\begin{equation}
v(x,0)=
\mathbb E_{x_1}x_1 - x
=
\int x_1\rho_1(x_1)\dd{x_1}-x.
\quad
v(x,1)=
x-\int x_0\rho_0(x_0)\dd{x_0}.
\label{eq:v:t0}
\end{equation}
This expression for the initial velocity means that
each point first tends
to the center of mass of the unknown distribution~$\rho_1$
regardless of its initial position.
\end{remark}


\paragraph{Extensions to SDE}
Now let the conditional map be stochastic:
$
\phic =
(1-t)x_0 + tx_1 + \sigma_e(t)\epsilon
$,
where $\epsilon\sim\mathcal N(0,1)$. 
Typically, $\sigma_e(0)=\sigma_e(1)=0$, for example, $\sigma_e(t)=t(1-t)\sigma_e$.

Note that this formulation covers (with appropriate selection of the $\sigma_e(t)$ parameter)
the case of diffusion models~\cite{Tong2024}.

Then, we can write the exact solution 
for a so-called \emph{score and flow
matching} objective 
(see~\cite{Tong2024} for details)
\begin{equation*}
\mathcal{L}_{\mathrm{[SF]}^2\text{M}}(\theta)=
\mathbb{E}\big[\underbrace{\|v_\theta(x,t)-u_t^\circ(x)\|^2}_{\text{flow matching loss}}+
\lambda(t)^2\underbrace{\|s_\theta(x,t)-\nabla\log p_t(x)\|^2}_{\text{score matching loss}}\big].
\end{equation*}
that corresponds to this map.
In the last expression,
the following explicit conditional expressions 
are considered 
in the cited paper
for the case $\sigma_e(t)=\sqrt{t(1-t)}\sigma_e$
\begin{equation*}
u_t^\circ(x)=\frac{1-2t}{t(1-t)}(x-(tx_1+(1-t)x_0))+(x_1-x_0),\;\;
\nabla\log p_t(x)=\frac{tx_1+(1-t)x_0-x}{\sigma_e^2t(1-t)}.
\end{equation*}

The exact solution (our result, explicit analog of the Eq.~(10) from \cite{Tong2024}) 
under consideration has the form~\eqref{eq:vtrue:SDE:1} and~\eqref{eq:strue:SDE:1}
and, for example for the for the Gaussian~$\rho_0$
this expressions reduced to the Eq.~\eqref{eq:vtrue:SDE:G} and~\eqref{eq:strue:SDE:G}, correspondingly.
See Appendix~\ref{sec:SDE} for the details on this case.

\paragraph{Simple examples}
Consider the case of Standard Normal Distribution as $\rho_0$ and Gaussian Mixture of two Gaussians as $\rho_1$.
Vector field have a closed form~\eqref{eq:vtrue:G-GG} in this case, and we can fast numerically solve ODE for trajectories.
Random generated trajectories and plot of the vector field are shown on Fig.~\ref{fig:G-GM:main} (a)--(b).
Detailed explanation of this case is in the Sec.~\ref{sec:vtrue:G-GG}.
Another example is related to the case of a stochastic map in the form of Brownian Bridge, which briefly described in the last paragraph and considered in Sec.~\ref{seq:SDE:samples} in details,
see Fig.~\ref{fig:G-GM:main} (c)--(f).
Note that at some $\sigma_e$ values the trajectories are a little bit straightened in this case compared to the usual linear map,
if we compare cases on the Fig.~\ref{fig:G-GM:SDE}.
\begin{figure}[tb]%
\def\lw{0.15}
\def\rw{0.1}
\def\lw{0.25}
\def\rw{0.2}
\def\pc#1{
\subfigure[\footnotesize BB Trajectories, $\sigma_e=#1$]{
\includegraphics[clip,width=\lw\linewidth]{traj_se=#1.pdf}
}
\subfigure[\footnotesize BB VF, $\sigma_e=#1$]{
\includegraphics[clip,width=\rw\linewidth]{img/field_se=#1.png}
}
}
\centering
\subfigure[\footnotesize GM trajectories]{
\includegraphics[clip,width=\lw\linewidth]{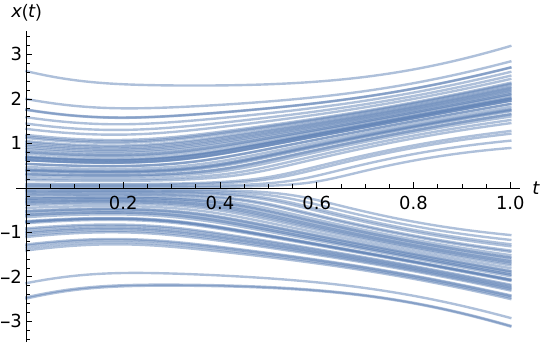}
}
\subfigure[\footnotesize GM VF]{
\includegraphics[clip,width=\rw\linewidth]{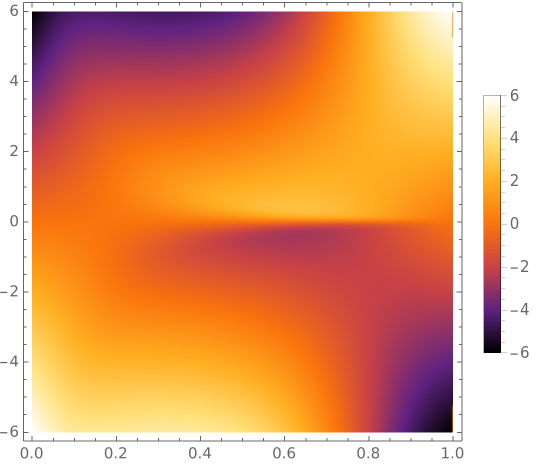}
}
\pc{3}
\pc{10}
\caption{Trajectories and vector field obtained in simple cases:
(a) $N=80$ random trajectories from $\normald01$ to GM;  
(b) 2D plot of the vector field in this case
(c)--(f) $N=40$ random trajectories from $\normald01$ to $\normald23$
and
2D plot of the vector field
for different~$\sigma_e$
for the Brownian Bridge map
}
\label{fig:G-GM:main}
\end{figure}

\subsection{Training scheme based on the modified loss}
Let us consider the difference between our new scheme
based on loss \Lour and the classical CFM learning scheme.
As a basis for the implementation of the learning scheme,
we take the open-source code\footnote{\url{https://github.com/atong01/conditional-flow-matching}}
from the works~\cite{Tong2024,tong2024improving}.

Consider a general framework of numerical schemes in classical CFM.
We first sample~$m$ random time variables~$t\sim\mathcal U[0,1]$.
Then we sample several values of~$x$.
To do this,
we sample a certain number~$n$ samples~$\{x_0^i\}_{i=1}^n$ from the  ``noisy'' distribution~$\rho_0$,
and the same number~$n$ of samples~$\{x_1^i\}_{i=1}^n$ from the unknown distribution~$\rho_1$.
Then we pair them (according to some scheme),
and get~$n$ samples as
$x^{j,i}=\phicex{t^j}{x_1^i}(x_0^i)$
(\eg a linear combination in the simple case of linear map:
$x^{j,i}=(1-t^j)x_0^i+t^jx_1^i$),
$\forall i=1,2,\dots, n$;
$\forall j=1,2,\dots, m$.
Note, than one of the variable $n$ or $m$ (or both) can be equal to~$1$.

At the step 2, the following discrete loss is constructed from the obtained samples
\begin{equation}
    \label{eq:discrete_loss_CFM}
    \LCFM^d(\theta)=
    \sum_{j=1}^m
    \sum_{i=1}^n
        \Bigl\|
        \model(x^{j,i},t^j)-
        \phicex{t^j}{x_1^i}'(x_0^i)
    \Bigr\|^2
    .
\end{equation}
Finally, we do a standard gradient descent step to update model parameters~$\theta$ using this loss.

The first and last step in our algorithm is the same as in the standard algorithm, but the second step is significantly different. Namely,
we additionally generate
a sufficiently large number~$N\gg n\cdot m$ of samples~$\ox$
from the unknown distribution~$\rho_1$,
sampling $(N-n)$ new samples and
adding to it the samples~$\{x_1^{i}\}_1^{n}$
that are already obtained on the previous step.

Then we form the following discrete loss
which replaces the integral on~$x_1$ in~$\Lour$ 
by its evaluation~$v^d$
by self-normalized importance sampling
or rejection sampling (see Appendix~\ref{sec:Int-est} for details)
\begin{equation}    
    \label{eq:discrete_loss_our}
    \Lour^d(\theta)=
    \sum_{j=1}^m
    \sum_{i=1}^n
        \Bigl\|
        \model(x^{j,i},t^j)
        -
v^d(x^{j,i},\,t^j)
    \Bigr\|^2
    .
\end{equation}
For example,
if we use 
self-normalized importance sampling
and
assume that the Jacobian $\det[\pdv*{\phic^{-1}(x)}{x}]$ do not depend on~$x_1$,
we can write
\begin{equation}
\def\updown#1{\rho_0\bigl(\phicex^{-1}t{\ox^k}(#1)\bigr)}%
v^d(x,\,t)=
    \myfrac{\sum_{k=1}^N
        w(t,\ox^k,x)\updown{x}}%
    {\sum_{k=1}^N\updown{x}}
    .
\label{eq:vd}
\end{equation}

\begin{theorem}\label{th:var_lower}
Under mild conditions,
the error variance of the integral gradient~\eqref{eq:grad}
using the Monte Carlo method~\eqref{eq:discrete_loss_our}
is lower than using formula~\eqref{eq:discrete_loss_CFM}
with the same number~$n\cdot m$ 
of samples for~$\{x\}$.
\end{theorem}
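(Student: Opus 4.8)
The plan is to compare, for a fixed sample budget, the variance of the stochastic gradient estimators obtained from the discrete losses \eqref{eq:discrete_loss_CFM} and \eqref{eq:discrete_loss_our}. First I would fix $t$ and a point $x$ (equivalently, fix the outer draws $t^j$ and $x^{j,i}$) and observe that in both cases the gradient contribution at that point has the form $2\,\bigl(\model(x,t)-\widehat u\bigr)^\top\partial_\theta v_\theta(x,t)$, where $\widehat u$ is an estimator of the target field $v(x,t)=\mathbb E_{x_1}\!\left[w(t,x_1,x)\,\rho_c(x\mid x_1,t)/\rho_1(x_1)\right]$ from Eq.~\eqref{eq:vtru_comm}. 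For the classical scheme, $\widehat u_{\mathrm{CFM}}=w(t,x_1,x)$ with the single conditioning sample $x_1$ that produced $x$ (so it is an unbiased one-sample estimator whose conditional law is exactly $\rho_c(\cdot\mid x,t)$ after the change of variables carried out in \eqref{eq:LCMour_1}--\eqref{eq:LCM_ind}); for the ExFM scheme, $\widehat u_{\mathrm{ExFM}}=v^d(x,t)$ is the $N$-sample self-normalized importance-sampling estimate \eqref{eq:vd} of the same conditional expectation. Since $\partial_\theta v_\theta(x,t)$ is a deterministic factor given the outer draws, controlling the gradient variance reduces to controlling $\mathbb D[\widehat u]$ (componentwise, then summed, matching the quantity $|\mathbb D_{x,x_1}\Delta v|$ plotted in Fig.~\ref{fig:CFM_ExFM_disps}).

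Next I would invoke the standard variance-of-the-mean bound for (self-normalized) importance sampling: up to the usual $O(1/N^2)$ correction for self-normalization, $\mathbb D[v^d(x,t)]=\tfrac{1}{N}\,\mathbb D_{x_1\sim\rho_c(\cdot\mid x,t)}\!\left[w(t,x_1,x)\right]+O(N^{-2})$, whereas $\mathbb D[\widehat u_{\mathrm{CFM}}]=\mathbb D_{x_1\sim\rho_c(\cdot\mid x,t)}\!\left[w(t,x_1,x)\right]$ exactly (one sample). Hence pointwise the ExFM estimator has variance smaller by a factor $\approx N$. I would then integrate this pointwise statement against $\rho_m(x,t)$ and average over $t\sim\mathcal U[0,1]$, and finally sum over the $n\cdot m$ outer index pairs: because the outer samples are shared between the two schemes and the per-sample variances simply add (the cross terms involve the common deterministic $\partial_\theta v_\theta$ and the gradient estimators at distinct outer points are conditionally independent given the outer draws), the total gradient-error variance of \eqref{eq:discrete_loss_our} is bounded by that of \eqref{eq:discrete_loss_CFM} times $(1/N+O(N^{-2}))$, which is $<1$ for $N$ large enough — in particular whenever $N>n\cdot m$ as assumed. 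The "mild conditions" to be stated are exactly those making these manipulations legitimate: $w(t,x_1,x)$ square-integrable under $\rho_c(\cdot\mid x,t)$ for a.e.\ $(x,t)$, bounded importance weights (or at least finite second moment of the weights, so the self-normalization correction is genuinely $O(N^{-2})$), and $\partial_\theta v_\theta$ finite at the relevant points.

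The main obstacle is making the self-normalized importance-sampling variance estimate rigorous and uniform enough to survive integration against $\rho_m$ and $t$: self-normalization introduces a ratio estimator whose exact variance is only asymptotically $\tfrac1N\mathbb D[\cdot]$, with a bias and higher-order terms controlled by moments of the weight $w_k=\rho_0(\phi^{-1}_{t,\ox^k}(x))/\sum_\ell \rho_0(\phi^{-1}_{t,\ox^\ell}(x))$; one must argue (via a delta-method / Taylor expansion of the ratio, or via a crude but uniform $L^2$ bound) that the $O(N^{-2})$ remainder is dominated by the $\tfrac1N$ main term after integration, which requires the weight moments to be integrable in $x$ against $\rho_m$. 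A secondary subtlety is handling the degenerate endpoints $t\to1$ (and $t\to0$), where $w(t,x_1,x)=(x_1-x)/(1-t)$ blows up and $\rho_{x_1}$ concentrates; I would either restrict to $t\in[\varepsilon,1-\varepsilon]$ and note the endpoint contributions are handled identically by both schemes (so they cancel in the comparison), or absorb them into the "mild conditions." Everything else — the change of variables $x_0\mapsto x_t$, the identification of the conditional law $\rho_c$, and the additivity of variances over outer samples — is already in place from \eqref{eq:LCMour_1}--\eqref{eq:LCM_ind} and Theorem~\ref{th:grads_eq}.
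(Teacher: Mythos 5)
Your route is essentially the paper's: fix the outer draws $(x^{j,i},t^j)$, note the gradient factorizes through a deterministic $\partial_\theta v_\theta$ so everything reduces to comparing the one-sample conditional estimator $w(t,x_1,x)$ (CFM) with the $N$-sample self-normalized importance-sampling estimate $v^d(x,t)$ of the same conditional expectation (ExFM), invoke the CLT asymptotic variance with its $1/N$ factor, and relegate the weight degeneracy near $t\to1$ to the ``mild conditions'' --- this is exactly the structure of the paper's sketch in Appendix~\ref{seq:App:skech}. Two points where you overstate what this yields: (i) the asymptotic variance of the self-normalized estimator is not $\tfrac1N\mathbb D_{x_1\sim\rho_{c}}\!\left[w(t,x_1,x)\right]$ but $\tfrac1N\,\mathbb E_{x_1\sim\rho_1}\!\bigl[\tilde w^2(x_1)\,(w(t,x_1,x)-v(x,t))^2\bigr]$ with $\tilde w\propto\rho_{c}/\rho_1$ (the paper's Eq.~\eqref{eq:App:DI}), so the clean ``smaller by a factor $\approx N$'' comparison against the exact one-sample variance $\mathbb D_{\rho_{c}}[w]$ only holds up to a weight-dependent constant --- under your bounded-weight assumption this is repairable, but it is precisely why the paper only asserts $k_D<1$ ``formally'' and declines to quantify the gain, since for $t$ near $1$ the weights $\rho_0\bigl(\frac{x-x_1t}{1-t}\bigr)$ collapse onto a few samples and the effective $N$ shrinks; and (ii) the sample set in \eqref{eq:vd} is not i.i.d.\ from $\rho_1$: the point $\ox^1$ is by construction the paired sample that generated $x$ (see Eq.~\eqref{eq:main_numeric:general}), which is what makes the $N=1$ case reduce exactly to the CFM gradient and slightly perturbs the plain SNIS analysis. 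With those caveats folded into the ``mild conditions,'' your argument matches the paper's; your additional step of integrating over $\rho_m$ and $t$ and summing over outer indices is a reasonable tightening that the paper does not carry out, and your remark that $N>n\cdot m$ is what guarantees the improvement is not needed --- the relevant quantity is $N$ per evaluation point, not its relation to $n\cdot m$.
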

Sketch of the proof is in the Appendix~\ref{seq:App:skech}.
The steps of our scheme are formally summarized in Algorithm~\ref{alg:main}.

\paragraph{Particular case of linear map and Gaussian noise}
Let $\phic$ be the linear flow: $\phic(x_0)=(1-t)x_0+tx_1$. 
and
consider the case of standard normal distribution for the initial density~$\rho_0$: $\rho_0(x)\sim\mathcal N(x\mid 0,I)$.
Then in the case of using self-normalized importance sampling, we have
\begin{equation}
    \label{eq:main_numeric}
    v^d(x,\,t)
    =
    \sum_{k=1}^N
    \frac{\ox^k - x}{1-t}\bigl(\softmax(Y^1,\,\ldots,\,Y^N)\bigr)_k
    ,
    \qq{where}
    Y^k = -\frac12
    \frac{\norm{x- t \cdot \ox^k}^2_{\mathbb R^d}}{1-t}.
\end{equation}
Here, the lower index~$k$ in $\softmax$ stands for the $k$-th component, and the $\softmax$ operation itself 
came about due to exponents in the Gaussian density as a more stable substitute for computing than directly through exponents.


\paragraph{Extension of other maps and initial densities~$\rho_0$}
Common expression~\eqref{eq:vtru_comm} can be reduced to closed form for the particular choices of density~$\rho_0$
and map~$\phi$ (consequently, expression for~$w$).
We summarise several known approaches for which FM-based techniques can be applied in Table~\ref{tab:FM_summary}%
\footnote{The idea and common structure of the Table is taken from~\cite{tong2024improving}}.
See Appendix~\ref{sec:ex:A} and~\ref{sec:anal} for derivations of formulas and for more extensions.
\begin{table}[tbh]
    \centering
    \caption{
    Correspondence between some methods which can reduced to FM framework 
    and our theoretical descriptions of them.
    }
    \label{tab:FM_summary}
     \resizebox{0.85\linewidth}{!}{%
    \begin{tabular}{@{}lrrr|p{12em}}
    \toprule
    Probability Path & $q(z)$ & $\mu_t(z)$ & $\sigma_t$ & \parbox{12em}{Explicit expressions: \\ vector field (VF) and score (S)}\\
    \midrule
    Var.\ Exploding~\cite{song_generative_2019} & $\rho_1(x_1)$ & $x_1$ & $\sigma_{1-t}$ & VF:  \eqref{eq:vtrue:VE}\\ 
    Var.\ Preserving~\cite{ho2020denoising} & $\rho_1(x_1)$ & $\alpha_{1-t} x_1$ & $\sqrt{1- \alpha_{1-t}^2}$ & VF:  \eqref{eq:vtrue:VP} \\
    Flow Matching~\cite{lipman2023flow}      & $\rho_1(x_1)$ & $t x_1$ & $t \sigma_s - t + 1$ & VF: \eqref{eq:vtrue} if $\sigma=0$; and \eqref{eq:vtrue:sigma} \\ 
    Independent CFM & $\rho_0(x_0) \rho_1(x_1)$ & $t x_1 + (1-t) x_0$ & $\sigma$ & VF: \eqref{eq:vtru_comm} 
    \\
    {Schr\"odinger Bridge CFM}~\cite{Tong2024} & 
    $\rho_0(x_0) \rho_1(x_1)$
    & $t x_1 + (1-t) x_0$ & $\sigma \sqrt{t (1-t)}$ &
    Can be obtained by SDE using VF: \eqref{eq:vtrue:SDE:G}, S:\eqref{eq:strue:SDE:G}
    \\
    %
    \bottomrule
    \end{tabular}
    }
\end{table}

\paragraph{Complexity}
We assume that the main running time of the algorithm is spent on training the model, especially if it is quite complex. Thus, the running time of one training step depends crucially on the number~$n\cdot m$ of samples~$\{x\}$ and it is approximately the same for both algorithms: the addition of points~$\overline x_1$ entails only an additional calculation using formula~\eqref{eq:main_numeric}, which can be done quickly and, moreover, can be simple parallelized.




\subsection{Irreducible dispersion of gradient for CFM optimization}
Ensuring the stability of optimization is vital. Let $\Delta\theta$ be changes in parameters, obtained by SGD with step size $\gamma / 2$ applied to the functional from Eq.~\eqref{eq:discrete_loss_CFM}:
\begin{equation}
\Delta v(x^{j,i},t^j) = -\gamma \cdot\bigl( v(x^{j,i},t^j) -  v^d(x^{j,i},\,t^j) \bigr)
.
\label{eq:var:up:ours}
\end{equation}
For simplification, we consider a function, $v_\theta(x,t)$, capable of perfectly fitting the CFM problem and providing an optimal solution for any point~$x$ and time~$t$. For a linear conditional flow at a specific point $x^{j,i}\sim\rho_{x_1^i}(\cdot,t^j)$ at time $t^j\sim U(0,1)$, the update $\Delta v(x^{j,i},t^j)$ can be represented as follows:
\begin{equation}
\Delta v(x^{j,i},t^j) = \gamma \left(x_1^i - \hat{x}_0^i - v(x^{j,i},t^j)\right),
\label{eq:inter_CFM_general}
\end{equation}
where $\hat{x}_0^i = \frac{x^{j,i} - t^jx_1^i}{1-t^j}$. We define the dispersion $\mathbb{D}_{x,x_1}f(x,x_1)$ for $x\sim \rho_{x_1}(\cdot,t)$ and $x_1\sim \rho_1$ as:
\begin{equation}
\mathbb D_{x,x_1}f(x,x_1) = \mathbb E_{x,x_1}f^2(x,x_1) - (\mathbb E_{x,x_1}f(x,x_1))^2
.
\label{eq:def_disp}
\end{equation}
\begin{proposition}
At the time $t = 0$, the dispersion of update in the form~\eqref{eq:inter_CFM_general} 
have the following element-wise lower bound:
\begin{equation*}
\mathbb{D}_{x^{j,i},x_1^i} \Delta v(x^{j,i},0) = \gamma^2\mathbb{D}_{x_1^i}x_1^i + \gamma^2\mathbb{D}_{x^{j,i},x_1^i}(x^{j,i}+ v(x^{j,i},0)) \geq \gamma^2\mathbb{D}_{x_1^i}x_1^i.
\end{equation*}
Equality is reached when the model $v(x^{j,i},0)$ has exact values equal to~\eqref{eq:v:t0}.
\end{proposition}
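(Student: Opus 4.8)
The plan is to evaluate the update~\eqref{eq:inter_CFM_general} at $t^j=0$, where the two sources of randomness decouple, and then invoke additivity of variance for independent summands. First I would observe that at $t^j=0$ the quantity $\hat{x}_0^i=\frac{x^{j,i}-t^jx_1^i}{1-t^j}$ collapses to $x^{j,i}$, so that
\[
\Delta v(x^{j,i},0)=\gamma\bigl(x_1^i-x^{j,i}-v(x^{j,i},0)\bigr)=\gamma\,x_1^i-\gamma\bigl(x^{j,i}+v(x^{j,i},0)\bigr).
\]
Moreover, for the linear conditional flow the push-forward density $\rho_{x_1}(\cdot,0)$ equals $\rho_0$ for every $x_1$ (the boundary condition in~\eqref{eq:FP}), so at $t=0$ the vectors $x^{j,i}\sim\rho_0$ and $x_1^i\sim\rho_1$ are independent; consequently the first summand above is a function of $x_1^i$ alone and the second of $x^{j,i}$ alone.

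Next I would apply, coordinate-wise, the identity $\mathbb D(a-b)=\mathbb D a+\mathbb D b$ valid for independent $a,b$ (the cross term drops because $\mathbb E[ab]=\mathbb E[a]\,\mathbb E[b]$ element-wise), with $a=\gamma x_1^i$ and $b=\gamma\bigl(x^{j,i}+v(x^{j,i},0)\bigr)$. Using the definition~\eqref{eq:def_disp}, this yields exactly
\[
\mathbb D_{x^{j,i},x_1^i}\Delta v(x^{j,i},0)=\gamma^2\,\mathbb D_{x_1^i}x_1^i+\gamma^2\,\mathbb D_{x^{j,i},x_1^i}\bigl(x^{j,i}+v(x^{j,i},0)\bigr),
\]
where including the redundant subscript $x_1^i$ in the second dispersion changes nothing, since the argument does not depend on $x_1^i$. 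The stated lower bound is then immediate from the non-negativity of each component of a dispersion vector.

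Finally, for the equality claim I would substitute the exact endpoint velocity from~\eqref{eq:v:t0}, namely $v(x,0)=\mathbb E_{x_1}x_1-x$; then $x^{j,i}+v(x^{j,i},0)=\mathbb E_{x_1}x_1$ is a deterministic constant, its dispersion vanishes, and the bound is attained. There is no genuinely hard step here: the only point that needs care is recognizing that the decoupling of $x^{j,i}$ and $x_1^i$ is special to $t=0$ (this is precisely what makes the cross term in the variance of the difference vanish), and that the entire statement, including the word ``element-wise'', is to be read component by component in $\mathbb R^d$.
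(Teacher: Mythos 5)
Your proof is correct and follows essentially the same route the paper takes (the paper states the proposition without a dedicated proof, but its dispersion computation in Appendix~\ref{sec:G-G:A} rests on exactly this decomposition into independent $x_0$- and $x_1$-contributions, which at $t=0$ reduces to your argument). You correctly identify the two key points: that at $t=0$ the conditional density $\rho_{x_1}(\cdot,0)=\rho_0$ decouples $x^{j,i}$ from $x_1^i$ so the variances add element-wise, and that plugging in $v(x,0)=\mathbb E_{x_1}x_1-x$ from Eq.~\eqref{eq:v:t0} makes the second term deterministic, attaining the bound.
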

Given that the dispersion cannot be reduced with an increase in batch size, the only available option is to decrease the step size of the optimization method, \ie, reduce the learning rate slowing down the convergence. The situation is much better for the proposed loss in \eqref{eq:discrete_loss_our}. We can express the update $\Delta v(x^{j,i},t^j)$ in the case of ExFM objective as:
\begin{equation}
\Delta v(x^{j},t^j) = \gamma^2 \Bigl(\sum\limits_{k=1}^Nx_1^k\Tilde{\rho}\left(x^{j,i}|x_1^k,t^j\right) - x^{j,i} - v(x^{j,i},t^j)\Bigr),
\label{eq:inter_ExFM_general}
\end{equation}
where $x^{j,i} \sim \rho_{x_1^i}(\cdot, t^j)$, $x_1^k \sim \rho_1$ 
and $\Tilde{\rho}\left(x^{j,i}|x_1^k,t^j\right) = \rho_0\left(\frac{x^{j,i}-t^jx_1^k}{1-t^j}\right) /
\sum\limits_{k=1}^N\rho_0\left(\frac{x^{j,i}-t^jx_1^k}{1-t^j}\right)$. 
Similar to the derivations in the previous part, we can found simplified form for the dispersion of update at $t=0$.
\begin{proposition}
At the time $t = 0$, the dispersion of update from \eqref{eq:inter_ExFM_general} 
have the following element-wise lower bound:
\begin{equation*}
\mathbb D_{x^{j,i},x_1^k}\Delta v(x^{j,i},0) = \frac{\gamma^2}{N}\mathbb D_{x_1^k}x_1^k + \gamma^2\mathbb D_{x^{j,i},x_1^k}(x^{j,i}+ v(x^{j,i},0)) \geq \frac{\gamma^2}{N}\mathbb D_{x_1^k}x_1^k
.
\end{equation*}
Equality is reached when the model $v(x^{j,i},0)$ has exact values equal to~\eqref{eq:v:t0}.
\end{proposition}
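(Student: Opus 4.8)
The plan is to mirror the argument used for the preceding CFM proposition, exploiting that at $t=0$ the self-normalised weights appearing in \eqref{eq:inter_ExFM_general} collapse to the uniform ones. First I would set $t=0$: since $\rho_0\!\left(\frac{x-tx_1^k}{1-t}\right)$ reduces to $\rho_0(x)$ for \emph{every} $k$, the weight $\tilde\rho(x^{j,i}\mid x_1^k,0)=\rho_0(x^{j,i})\big/\sum_{k'=1}^N\rho_0(x^{j,i})$ equals $1/N$ irrespective of $k$, so $\sum_{k=1}^N x_1^k\,\tilde\rho(x^{j,i}\mid x_1^k,0)=\bar x_1^N:=\frac1N\sum_{k=1}^N x_1^k$. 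Hence at $t=0$ the update \eqref{eq:inter_ExFM_general} is exactly the analogue of \eqref{eq:inter_CFM_general} with the single target sample $x_1^i$ replaced by the empirical mean $\bar x_1^N$, and I would regroup it as $\Delta v(x^{j,i},0)=\gamma\bigl(\bar x_1^N-(x^{j,i}+v(x^{j,i},0))\bigr)$, with the prefactor normalised as in \eqref{eq:inter_CFM_general}.

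Next I would establish the independence needed to split the dispersion. At $t=0$ the conditional density $\rho_{x_1^i}(\cdot,0)$ equals $\rho_0$ (set $t=0$ in the formula for $\rho_{x_1}$ from the Corollary), so $x^{j,i}\sim\rho_0$ does not depend on $x_1^i$; in particular $x^{j,i}$ is independent of the whole i.i.d.\ pool $\{x_1^k\}_{k=1}^N$, even if the sample $x_1^i$ is reused inside that pool. Consequently $\bar x_1^N$ and $x^{j,i}+v(x^{j,i},0)$ are independent random vectors, and by the element-wise definition \eqref{eq:def_disp} the dispersion of their (rescaled) difference is additive:
\begin{equation*}
\mathbb D_{x^{j,i},x_1^k}\Delta v(x^{j,i},0)=\gamma^2\,\mathbb D_{x_1^k}\bar x_1^N+\gamma^2\,\mathbb D_{x^{j,i},x_1^k}\bigl(x^{j,i}+v(x^{j,i},0)\bigr).
\end{equation*}
Since the $x_1^k$ are i.i.d.\ with law $\rho_1$, the component-wise variance of their mean is $\mathbb D_{x_1^k}\bar x_1^N=\frac1N\mathbb D_{x_1^k}x_1^k$, which gives the claimed equality.

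Finally, the lower bound follows immediately from non-negativity of $\mathbb D_{x^{j,i},x_1^k}\bigl(x^{j,i}+v(x^{j,i},0)\bigr)$, and equality holds iff $x+v(x,0)$ is $\rho_0$-almost surely constant. Plugging in the optimal initial field $v(x,0)=\mathbb E_{x_1}x_1-x$ from \eqref{eq:v:t0} makes $x^{j,i}+v(x^{j,i},0)=\mathbb E_{x_1}x_1$ deterministic, so equality is attained precisely there (and, up to $\rho_0$-null sets, only there). The only step that genuinely needs care is the independence claim at $t=0$: one must use that the conditional law degenerates to $\rho_0$ at that endpoint, which is exactly what decouples $x^{j,i}$ from the resampled pool and legitimises both the variance split and the $1/N$ gain; the remainder is the same bookkeeping as in the CFM case.
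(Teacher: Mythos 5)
Your proposal is correct and follows essentially the same route the paper takes (the paper only sketches it as ``similar to the derivations in the previous part''): at $t=0$ the self-normalised weights collapse to $1/N$, so the target becomes the empirical mean $\frac1N\sum_k x_1^k$, whose component-wise variance is $\frac1N\mathbb D_{x_1}x_1$, and the split of the dispersion is justified exactly by the observation you emphasize, namely that at $t=0$ the point $x^{j,i}=x_0^i\sim\rho_0$ is independent of the whole pool $\{x_1^k\}$ even though $x_1^i$ is reused in it. The only nitpick is your parenthetical ``only there'': equality actually holds for any field of the form $v(x,0)=c-x$ with $c$ a constant vector, of which \eqref{eq:v:t0} is the particular instance the proposition asserts, but since the statement only claims sufficiency this does not affect the proof.
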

In comparison to CFM, the dispersion of the update is $N$ times smaller than the dispersion of the target distribution and could be controlled without impeding convergence by adjusting the number of samples $N$. In Figure~\ref{fig:CFM_ExFM_disps}, we visually compare the dispersions of CFM and ExFM. The illustration aligns a standard normal distribution $\mathcal  N(0,I)$ with a shifted and scaled variant $\mathcal N(\mu,I \sigma^2)$. ExFM yields lower dispersion throughout the range $t\in[0,1]$. Detailed analytical calculations of the optimal velocity $v(x,t)$ 
and dispersion are provided in the Appendix~\ref{sec:G-G:A}.

\begin{figure}[!ht]%
\def\sbsub#1#2{\includegraphics[trim=0cm 0cm 0cm 0cm,width=0.13\linewidth]{#1_#2_15000}}%
\def\sb#1#2{\subfigure[\scriptsize#2]{%
\hbox to 4.7em{\hss\vbox{
\sbsub{ExFM}{#1}\\
\sbsub{CFM}{#1}\\
\sbsub{OT-CFM}{#1}
}\hss}}%
}
\centering
\sb{swissroll}{swissroll}
\sb{moons}{moons}
\sb{8gaussians}{8gaussians}
\sb{circles}{circles}
\sb{2spirals}{2spirals}
\sb{checkerboard}{\hbox to 4.4em{\kern-0.5ex checkerboard}}
\sb{pinwheel}{\hbox to 3.0em{pinwheel}}
\sb{rings}{rings}
\caption{Visual comparison of methods on toy 2D data. First row sampled by ExFM, second row sampled by CFM, third row sampled by OT-CFM.
}%
\label{fig:CFM_ExFM_toy}%
\end{figure}
\begin{table}[!htb]
\centering
\def\g{\bf}
\caption{Wasserstein distance comparison for ExFM, CFM and OT-CFM methods for 2D-toy datasets for $15\,000$ learning steps ($30\,000$ learning steps for \texttt{rings} dataset) mean and std taken from 10 sampling iterations.}
\begin{sc}
\begin{tabular}{lrrr}
\toprule
   Data & ExFM & CFM & OT-CFM \\
\midrule
         swissroll    & \g 5.95e-02 $\pm$ 4.3e-03 & 8.68e-02 $\pm$ 7.3e-03 & 6.98e-02 $\pm$ 6.1e-03 \\
 moons        & \g 4.87e-02 $\pm$ 4.7e-03 & 6.80e-02 $\pm$ 8.2e-03 & 5.94e-02 $\pm$ 6.3e-03 \\
 8gaussians   & \g 8.83e-02 $\pm$ 1.41e-02 & 1.12e-01 $\pm$ 1.4e-02 & 1.00e-01 $\pm$ 1.5e-02 \\
 circles      & \g 6.70e-02 $\pm$ 3.3e-03 & 8.51e-02 $\pm$ 3.4e-03 & 8.47e-02 $\pm$ 6.9e-03 \\
 2spirals     & \g 6.94e-02 $\pm$ 9.5e-03 & 1.01e-01 $\pm$ 6e-03 & 1.08e-01 $\pm$ 2e-02 \\
 checkerboard & \g 1.14e-01 $\pm$ 1.1e-02 & 1.59e-01 $\pm$ 1.4e-02 & 1.22e-01 $\pm$ 1.5e-02 \\
 pinwheel     & \g 6.52e-02 $\pm$ 5.9e-03 & 1.13e-01 $\pm$ 1.1e-02 & 8.08e-02 $\pm$ 5.8e-03 \\
 rings        & \g 6.35e-02 $\pm$ 4.4e-03 & 1.16e-01 $\pm$ 4e-03 & 1.08e-01 $\pm$ 3e-03 \\
        
\bottomrule
\end{tabular}
\end{sc}
\label{tab:W_toy}
\end{table}

\begin{table}[!htb]
\centering
\def\g{\bf}
\caption{NLL comparison for ExFM, CFM and OT-CFM methods for tabular datasets for 10 000 learning steps, mean and std taken from 10 sampling iterations.}
\begin{sc}
\begin{tabular}{lrrr}
\toprule
   Data & ExFM & CFM & OT-CFM \\
\midrule
        power & \g -8.51e-02 $\pm$ 4.85e-02 & 1.64e-01 $\pm$ 4.2e-02 & 5.22e-02 $\pm$ 3.92e-02 \\ 
        gas & \g -5.53e+00 $\pm$ 4e-02 & -5.00e+00 $\pm$ 3e-02 & -5.48e+00 $\pm$ 3e-02 \\ 
        hepmass & \g 2.16e+01 $\pm$ 6e-02 & 2.21e+01 $\pm$ 6e-02 & \g 2.16e+01 $\pm$ 4e-02 \\ 
        bsds300 & -1.29e+02 $\pm$ 8e-01 & -1.29e+02 $\pm$ 9e-01 & \g -1.32e+02 $\pm$ 6e-01 \\ 
        miniboone & \g 1.34e+01 $\pm$ 2e-04 & 1.42e+01 $\pm$ 1e-04 & 1.43e+01 $\pm$ 9e-05 \\ 
        
\bottomrule
\end{tabular}
\end{sc}
\label{tab:CFM_ExFM_tabular}
\end{table}

\section{Numerical Experiments}

\paragraph{Toy 2D data}
We conducted unconditional density estimation among eight distributions. 
Additional details of the experiments see in the Appendix~\ref{seq:AppAdd:numeric}.
We commence the exposition of our findings by showcasing a series of classical 2-dimensional examples, as depicted in Fig.~\ref{fig:CFM_ExFM_toy} and Table~\ref{tab:W_toy}. Our observations indicate that ExFM adeptly handles complex distribution shapes is particularly noteworthy, especially considering its ability to do so within a small number of learning steps. Additionally, the visual comparison underscores the evident superiority of ExFM over the CFM and OT-CFM approaches. 

\paragraph{Tabular data}

We conducted unconditional density estimation on five tabular datasets, namely \texttt{power}, \texttt{gas}, \texttt{hepmass}, \texttt{minibone}, and \texttt{BSDS300}. Additional details of the experiments see in the Appendix~\ref{seq:AppAdd:numeric}.
The empirical findings obtained from the numerical experiments from Table~\ref{tab:CFM_ExFM_tabular} indicate a statistically significant improvement in the performance of our proposed method. Notably, ExFM demonstrates a notable acceleration in convergence rate.

\paragraph{High-dimensional data and additional experiments}

We conducted experiments on high-dimensional data, among them experiments on CIFAR10 and MNIST dataset. FID results on CIFAR10 shows slightly better score among sampled images.

Additional details of the experiments and sampled images see in the Appendix~\ref{seq:AppAdd:numeric}.

\paragraph{Stochastic ExFM (ExFM-S) on toy 2D data} 
We evaluated the performance of the stochastic version of ExFM (ExFM-S) with use of expressions given in Sec.~\ref{seq:SDE:samples} 
on four standard toy datasets.
The primary experimental setup follows that used in \cite{tong2024improving}. Additional details on the hyperparameters used are available in Appendix~\ref{seq:AppAdd:numeric}. Based on the findings presented in Table~\ref{tab:ExFM_S_comparison}, we determine that ExFM-S surpasses I-CFM on all four datasets in terms of generative performance ($\mathcal{W}_2$) and also outperforms in terms of OT optimality (NPE) on two of them, exhibiting similar results on the remaining datasets. It also demonstrates performance similar to OT-CFM. While ExFM-S is not as robust as the basic ExFM, it enables the matching of one dataset to another (moons $\rightarrow$ 8gaussians) as it does not necessitate the presence of an explicit formula for $\rho_0$.
Among other things, this experiment demonstrates the feasibility of our methods when both distributions $\rho_0$ and $\rho_1$ are unknown. 
\begin{table}[!tbh]
\caption{ExFM-S evaluation on four toy datasets ($\mu \pm \sigma$ over three seeds). For comparison we take I-CFM, OT-CFM, and ExFM (no values for moons $\rightarrow$ 8gaussians due to the absence of explicit formula for $\rho_0$). Performance in generative modeling ($\mathcal{W}_2$) and dynamic OT optimality (NPE) is assessed. The best result for each metric is highlighted in bold. Instances where we outperform CFM are underscored.}
\resizebox{\linewidth}{!}{
\begin{tabular}{ccccccccc}
\hline
Metric $\rightarrow$                         & \multicolumn{4}{c}{$\mathcal{W}_2 \downarrow$}                                                                                                                                                              & \multicolumn{4}{c}{NPE $\downarrow$}                                                                                                                                                       \\ \cline{2-9} 
Algorithm $\downarrow$ Dataset $\rightarrow$ & $\mathcal{N}\rightarrow$ moons                & $\mathcal{N}\rightarrow$ 8gaussians            & moons $\rightarrow$ 8gaussians                 & $\mathcal{N}\rightarrow$ 2spirals              & $\mathcal{N}\rightarrow$ moons       & $\mathcal{N}\rightarrow$ 8gaussians            & moons $\rightarrow$ 8gaussians       & $\mathcal{N}\rightarrow$ 2spirals              \\ \hline
I-CFM                                          & $0.522 \pm 0.015$                             & $0.647 \pm 0.078$                              & $0.966 \pm 0.21$                               & $1.662 \pm 0.067$                              & $0.328 \pm 0.051$                    & $0.209 \pm 0.009$                              & $0.945 \pm 0.025$                    & $0.098 \pm 0.04$                               \\
OT-CFM                                       & $0.427 \pm 0.038$                             & $0.528 \pm 0.053$                              &\bf 0.569 $\pm$ 0.018                              & $1.322 \pm 0.052$                              & \bf 0.065 $\pm$ 0.068 & \bf 0.031 $\pm$ 0.018           & \bf 0.074 $\pm$ 0.026 & \bf 0.031 $\pm$ 0.02            \\
ExFM                                        & \bf 0.318 $\pm$ 0.010          & \bf 0.445 $\pm$ 0.075           & --                                             & \bf 1.276 $\pm$ 0.043           & $0.382 \pm 0.050$                    & $0.213 \pm 0.023$                              & --                                   & \underline{0.069 $\pm$ 0.064} \\
ExFM-S                                      & \underline{0.486 $\pm$ 0.09} & \underline{0.570 $\pm$ 0.053} & \underline{0.728 $\pm$ 0.063} & \underline{1.361 $\pm$ 0.181} & $0.35 \pm 0.143$                     & \underline{0.166 $\pm$ 0.039} & $0.946 \pm 0.059$                    & \underline{0.083 $\pm$ 0.059} \\ \hline
\end{tabular}}

\label{tab:ExFM_S_comparison}
\end{table}


\section{Conclusions}
The presented method introduces a new loss function in tractable form (in terms of integrals) that 
improves upon the existing Conditional Flow Matching approach. 
New loss as a function of the model parameters, reaches zero at its minimum.
Thanks to this, we can:
a) write an explicit expression for the vector field on which the loss minimum is achieved;
b) get a smaller variance when training on the discrete version of the loss, 
therefore, we can learn the model faster and more accurately.

Numerical experiments conducted on toy 2D data show reliable outcomes under uniform conditions and parameters. 
Comparison of the absolute values of loss for the proposed method and for CFM for the same distributions show that the absolute values of loss for these models differ strikingly, by a factor of $10^2$--$10^3$.
Experiments on high-dimensional datasets also confirm the theoretical deductions about the variance reduction of our method. 
However, we emphasize that we do not expect to use the proposed method in its pure form. 
On the contrary, we expect that the theoretical implications of our formulas will contribute to the construction of better learning or inference algorithms in conjunction with other heuristics or methods.

Algebraic analysis of variance for some cases (in particular, for the case~$t=0$ or for the case of two Gaussians as initial and final distributions) show an improvement in variance when using the new loss. However, it is rather difficult to analyze in the general case, for all times $t$ and general distributions~$\rho_0$ and~$\rho_1$.

Having the expression for the vector field and score in the form of integrals, we can explicitly write out their expressions for some simple cases; in the case of Gaussian distributions we can also write out the exact solution for the trajectories.
Thus, our approach allows one to advance the theoretical study of FM-based and Diffusion Model-based frameworks.


\printbibliography


\newpage

\appendix

\section{Proof of the theorems}
\subsection{Proof of the Theorem~\ref{th:grads_eq}}
\label{seq:App:diff}

\begin{proof}
    We need to proof, that
$
    \dv{\LCFM(\theta)}{\theta}
    =
    \dv{\Lour(\theta)}{\theta}
$.

To establish the equivalence of $\LCFM$ and $\Lour$ up to a constant term, we begin by expressing $\LCFM$ in the format specified by equation \eqref{eq:LCM_ind}:
\begin{equation*}
\LCFM =
    \mathbb E_{t,x_1,x\sim\rho_m(\cdot,t)}
    \norm{
        \model(x,t)-
        w(t,x_1,x)
    }^2
    \times
    \rho_{c}(x|x_1,t)/\rho_1(x_1).
\end{equation*}
Utilizing the bilinearity of the 2-norm,
we can rewrite $\LCFM$ as:
\begin{multline}
\label{eq:App:LCM}
\LCFM = 
    \mathbb E_{t,x_1,x\sim\rho_m(\cdot,t)}
    \frac{\norm{
        \model(x,t)
    }^2 \rho_{c}(x|x_1,t)}{\rho_1(x_1)}
    -{}
    \\
    2\mathbb E_{t,x_1,x\sim\rho_m(\cdot,t)}
    \frac{\model(x,t)^T \cdot w(t,x_1,x)
    \rho_{c}(x|x_1,t)}{\rho_1(x_1)} +
    C.
\end{multline}
Here,
$T$ denotes transposed vector,
dot denotes scalar product,
$C$ represents a constant independent of $\theta$. 

Noting that $\mathbb E_{x_1}\rho_{c}(x|x_1,t)/\rho_1(x_1) = 1$:
\begin{equation*}
\mathbb E_{x_1}\frac{\rho_{c}(x|x_1,t)}{\rho_1(x_1)} =
\int\frac{\rho_{x_1}(x,t)\rho_1(x_1) \dd{x_1}}{
        \int
        \rho_{x_1}(\xt,t)
        \rho_1(x_1)
        \dd {x_1}
    } = 1,
\end{equation*}
we can simplify the first term 
in the expansion~\eqref{eq:App:LCM}:
\begin{multline}
\mathbb E_{t,x_1,x\sim\rho_m(\cdot,t)}
\frac{
    \norm{
        \model(x,t)
    }^2  \rho_{c}(x|x_1,t)}{\rho_1(x_1)}
    = \\
    E_{t,x\sim\rho_m(\cdot,t)}\norm{
        \model(x,t)
    }^2 
    \,\mathbb E_{x_1}\frac{\rho_{c}(x|x_1,t)}{\rho_1(x_1)}
    = 
    E_{t,x\sim\rho_m(\cdot,t)}\norm{
        \model(x,t)
    }^2.
\label{eq:App:LCM:simpl}
\end{multline}

For our loss~\Lour in the form~\eqref{eq:Lour}
we also use the bilinearity of the norm:
\begin{equation}
\Lour=
\mathbb E_{t,x\sim\rho_m(\cdot,t)}
    \norm{
        \model(x,t)
    }^2 -{}
    2\mathbb E_{t,x\sim\rho_m(\cdot,t)}
    \mathbb E_{x_1}
    \frac{\model(x,t)^T \cdot w(t,x_1,x)
    \rho_{c}(x|x_1,t)}{\rho_1(x_1)} +
    C.
\end{equation}

Comparing the last expression and the~Eq.~\eqref{eq:App:LCM} 
with the modification~\eqref{eq:App:LCM:simpl}
and also taking into account 
the independence of random variables~$x$ and~$x_1$,
we come to the conclusion
that $\Lour$ is equal to $\LCFM$ up to some constant
independent of the model parameters.

\end{proof}

\subsection{Sketch of the proof of the Theorem~\ref{th:var_lower}}
\label{seq:App:skech}
\begin{proof}
We need to prove that $\mathbb D\dv{\Lour^d(\theta)}{\theta} \leq\mathbb D\dv{\LCFM^d(\theta)}{\theta}$, where $\Lour^d(\theta)$ and $\LCFM^d(\theta)$ discrete loss functions presented in \eqref{eq:discrete_loss_our} and \eqref{eq:discrete_loss_CFM}. Firstly, let us rewrite the derivative of loss functions using the bilinearity:
\begin{equation*}
\dv{\Lour^d(\theta)}{\theta} = 
2\sum\limits_{i,j}
\left(
\dv{\model(x^{j,i},t^j)}{\theta}
\right)^T
\cdot
\bigl(\model(x^{j,i},t^j) - 
v^d(x^{j,i},t^j)
\bigr)
.
\end{equation*}

Note that in this expression, values~$x^{j,i}$ as well as~$t^j$,
which are included in the argument of the function~$v$, 
are fixed (our goal to calculate the variance with fixed model arguments).
Thus, we need to consider the variance of the remaining expression 
arising from the randomness of~$\ox^k$.

Recall (below we will omit the indices at variables~$x$ and~$t$),
$$
\def\updown#1{\rho_0\bigl(\phicex^{-1}t{\ox^k}(#1)\bigr)}%
v^d(x,\,t)=
    \frac{\sum_{k=1}^N
        w(t,\ox^k,x)\cdot\updown{x}}%
    {\sum_{k=1}^N\updown{x}}
    .
$$
Note, that if~$N=1$,
(\ie 
we do not sample any additional points 
other than the ones we have already sampled) 
this expression is exactly the same as the derivative of the common discretized CFM loss~$\dv{\LCFM^d(\theta)}{\theta}$.

Moreover, 
recall that one of the points (without loss of generality, 
we can assume that its index is 1) $\ox^1$ 
is added from the set from which point~$x$ was derived:
$x=\phicex{t}{\ox^1}(x_0)$.
(Here $x_0$ is the paired point to~$\ox^1$)

Thus, we can rewrite expression for~$v^d$:
\begin{equation}
    \label{eq:main_numeric:general}
v^d(x,\,t)
=
\def\uandd#1{\rho_0\left(
\phicex^{-1}{t}{\ox^#1}(x)
\right)
}
    \frac{w(t,\ox^1,x)\rho_0(x_0)  + \sum_{k=2}^N
       w(t,\ox^k,x)
       \cdot
        \uandd k
    }{\rho_0(x_0) + \sum_{k=2}^N\uandd k
    }    
    .
\end{equation}
Thus, our task was reduced to evaluating 
how well the additional terms (for~$k$ starting from~$2$) 
improve approximate 
of the original integrals that are in loss~\eqref{eq:Lour}.

So, we need to estimate the following dispersion ratio,
where in the numerator is the variance of discrete loss CFM, 
and in the denominator~--- the variance of loss ExFM:
$$
k_D=
\frac{
\mathbb D\bigl(v_\theta(x, t) - w(t,\ox^1,x) \bigr)
}{
\mathbb D\left(v_\theta(x, t) - 
\def\updown#1{\rho_0\bigl(\phicex^{-1}t{\ox^k}(#1)\bigr)}%
    \frac{\sum_{k=1}^N
        w(t,\ox^k,x)\cdot\updown{x}}%
    {\sum_{k=1}^N\updown{x}}
\right)
}
$$
The smaller coefficient~$k_D$ is, the better the proposed loss ExFM works.

Formally, we can write our problem as an importance sampling problem for the following integral:
$$
I=\int f(x)p(x)\dd{x}.
$$
This integral we estimate by sample mean of the following expectation
over some random variable with density function~$q(x)$:
$$
I=\mathbb E_{x\sim q}\bigl(w(x)f(x)\bigr)
$$
with
$$
w(x)=\frac{p(x)}{q(x)}.
$$
We replace the exact value of~$I$ with the value
\def\Ii{\overline I}
$$
\Ii=\frac{\sum_{k=1}^N w(\ox^i)f(\ox^k)}{\sum_{i=k}^N w(\ox^k)}.
$$
It follows from the strong law of large numbers that in the limit~$N\to\infty$,
$I\to\Ii$ almost surely.
From the central limit theorem we can find the asymptotic variance:
\begin{equation}
\mathbb D\Ii=
\frac1N
\mathbb E_{x\sim q} \bigl(w^2(x)(f(x)-I)^2\bigr).
\label{eq:App:DI}
\end{equation}

In our case (loss \Lour), we have 
$q(x_1)=\rho_1(x_1)$,
$f(x_1)=w(t,x_1,x)$ and
$w(x_1)=\rho_0\left(
\phicex^{-1}{t}{x_1}(x)
\right)$.

Despite the fact that the equation~\eqref{eq:App:DI} for the variance contains~$N$ in the denominator, 
it is rather difficult to give an estimate of its behavior in general. 
The point is that this formula is well suited for the case when~$w$ in it is of approximately the same order.
In the considered case, this is achieved at times~$t$ noticeably less than~$1$.

But in the case, when~$t$ is closed to~$1$ we have, 
for example, for the linear map, 
that 
$$
w(x_1)=\rho_0\left(
\phicex^{-1}{t}{x_1}(x)
\right)
=
\rho_0\left(
\frac{x-x_1t}{1-t}
\right)
$$
and this function has a sharp peak near the point~$x/t$ if it is considered as a function of~$x_1$.
Thus, at such values of~$t$, only a small number of summands will give a sufficient contribution to the sum
compared to the first term.

Finally,
inequality $k_D<1$ is formally fulfilled, but how much~$k_D$ is less 
than one depends on many factors.

\end{proof}


\subsection{Expressions for the regularized map}
To justify the expression~\eqref{eq:vtrue},
we use a invertable 
transformation and then strictly take the limit~$\sigma_s\to0$.

Expression Eq.~\eqref{eq:vtrue}, \eqref{eq:main_numeric}
are obtained for the simple map 
$\phic(x_0)=(1-t)x_0+tx_1$
which is not invertable at~$t=1$.
For the map with small regularizing parameter~$\sigma_s>0$
$\phic(x_0)=(1-t)x_0+tx_1+\sigma_sx_0$, which is invertable at all time values $0\leq t\leq1$,
Eq.~\eqref{eq:vtrue}, \eqref{eq:main_numeric} needs modifications.
Namely, for this map the following exact formulas holds true
\begin{equation}
\!v(x,t)
=\!\!\int \!\!w(t,x_1,x)
\rho_{c}(x|x_1,t)
\rho_1(x_1)
\dd{x_1}\!=\!
\dfrac{
\int
\bigl(x_1-x(1-\sigma_s)\bigr)
\rho_0\left(
\frac{x-x_1 t}{1+\sigma_st-t}
\right)
\rho_1(x_1)
\dd{x_1}}{
( 1+\sigma_st-t )
\int
\rho_0\left(
\frac{x-x_1 t}{1+\sigma_st -t}
\right)
\rho_1(x_1)
\dd{x_1}}{
}
.
\label{eq:vtrue:sigma}
\end{equation}
By direct substitution we make sure that for this vector field
\begin{equation}
v(x,\,0)=
\int x_1\rho_1(x_1)\dd{x_1}-
x(1-\sigma_s)
\label{eq:v:t0sigma}
\end{equation}
and
\begin{equation}
v(x,\,1)=
\dfrac{
\int
(x-y)
\rho_0(
y
)
\rho_1(x-y\sigma_s)
\dd{y}}{
\int
\rho_0(y)
\rho_1(x-y\sigma_s)
\dd{y}}{
}
,
\label{eq:v:t1sigma}
\end{equation}
where we perform
change of the variables $y\gets\frac{x_1-x}{\sigma_st}$.

\subsubsection{Prof of the explicit formula~\texorpdfstring{\eqref{eq:vtrue}}{(12)} for the vector field}
\label{sec:App:expl}
\begin{assumption}
\label{ass:rho_1}
Density $\rho_1$ is continuous at any point~$x\in(-\infty,\,\infty)$.
\end{assumption}
\begin{theorem}
In equations~\eqref{eq:vtrue:sigma}, \eqref{eq:v:t0sigma} and~\eqref{eq:v:t1sigma}
we can take the limit~$\sigma_s\to0$
under integrals
to get Eq.~\eqref{eq:vtrue} and~\eqref{eq:v:t0}.
\end{theorem}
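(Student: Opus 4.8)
The three regularised identities \eqref{eq:vtrue:sigma}, \eqref{eq:v:t0sigma}, \eqref{eq:v:t1sigma} all express the vector field as a ratio of integrals depending on the parameter $\sigma_s$, and \eqref{eq:vtrue}, \eqref{eq:v:t0} are obtained from them by formally setting $\sigma_s=0$ inside every integrand. So the whole content of the theorem is that the limit $\sigma_s\to 0^{+}$ commutes with the integral sign. The plan is: fix $x$ and the time $t$; restrict the parameter to a bounded window $\sigma_s\in(0,\sigma_0]$; apply the dominated convergence theorem to the numerator and the denominator separately; and then pass to the quotient using that the limiting denominator is strictly positive.

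\textbf{Interior times $0\le t<1$.} For the regularised linear map the inverse is $\phi^{-1}_{t,x_1}(x)=\dfrac{x-x_1 t}{1-t+\sigma_s t}$. First I would note that on $\sigma_s\in(0,\sigma_0]$ the denominator is squeezed, $1-t\le 1-t+\sigma_s t\le 1-t+\sigma_0$, so it stays away from $0$, and $\phi^{-1}_{t,x_1}(x)\to\frac{x-x_1 t}{1-t}$ pointwise in $x_1$. Continuity of $\rho_0$ (the standard Gaussian in every case of interest) then gives pointwise convergence of the integrands in \eqref{eq:vtrue:sigma}, while the prefactor $x_1-x(1-\sigma_s)\to x_1-x$. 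For domination I would use boundedness of $\rho_0$: with $M:=\sup\rho_0$ the numerator integrand is bounded by $(|x_1|+|x|)\,M\,\rho_1(x_1)$ and the denominator integrand by $M\,\rho_1(x_1)$, both $\sigma_s$-independent and integrable once $\rho_1$ is a probability density with finite first moment (implicit throughout, and true for the Gaussian mixtures in the examples). Dominated convergence then passes the limit under both integrals, turning \eqref{eq:vtrue:sigma} into \eqref{eq:vtrue}; well-definedness of the limiting quotient needs $\int\rho_0\!\big(\tfrac{x-x_1 t}{1-t}\big)\rho_1(x_1)\dd{x_1}>0$, which holds whenever $\rho_0$ has full support. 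The case $t=0$ requires nothing: \eqref{eq:v:t0sigma} already reads $\int x_1\rho_1(x_1)\dd{x_1}-x(1-\sigma_s)$, whose $\sigma_s\to0$ limit is the first line of \eqref{eq:v:t0}.

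\textbf{The singular time $t=1$.} Here the un-regularised map fails to be invertible, which is exactly why \eqref{eq:v:t1sigma} (obtained after the substitution $y=\frac{x_1-x}{\sigma_s t}$) is the right object in which to take the limit. Now the $\sigma_s$-dependence sits only in the arguments $\rho_1(x-y\sigma_s)$, which converge to $\rho_1(x)$ for every $y$ by continuity of $\rho_1$ (Assumption~\ref{ass:rho_1}); domination is by $(|x|+|y|)\,\rho_0(y)\,\sup\rho_1$ in the numerator and $\rho_0(y)\,\sup\rho_1$ in the denominator, integrable because the Gaussian $\rho_0$ has a finite first moment. Dominated convergence then collapses \eqref{eq:v:t1sigma} to $\dfrac{\rho_1(x)\int (x-y)\rho_0(y)\dd{y}}{\rho_1(x)\int\rho_0(y)\dd{y}}=x-\int y\,\rho_0(y)\dd{y}$, \ie the second line of \eqref{eq:v:t0} (using $\int\rho_0=1$ and $\rho_1(x)>0$).

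\textbf{Main obstacle.} The only genuine difficulty is producing a dominating function that is uniform in $\sigma_s$: $\rho_0$ is evaluated at an argument that itself moves with $\sigma_s$, so one cannot dominate by $\rho_0$ at a fixed point and must instead trade on boundedness of $\rho_0$ together with a finite-first-moment hypothesis on the other density. A secondary subtlety is keeping the denominator bounded below as $\sigma_s\to0$, so that the \emph{quotient}, not merely its numerator and denominator, converges; this follows from positivity and continuity of $\rho_0$ together with Fatou's lemma. Everything after that is a direct substitution into the limiting integrands.
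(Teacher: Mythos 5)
Your proposal is correct and follows essentially the same route as the paper: dominate the numerator and denominator integrands of \eqref{eq:vtrue:sigma} by $\sigma_s$-independent integrable functions (boundedness of $\rho_0$ plus a finite first moment of $\rho_1$), treat $t=1$ separately via the form \eqref{eq:v:t1sigma} where continuity of $\rho_1$ (Assumption~\ref{ass:rho_1}) gives the pointwise limit, and swap limit and integral — the paper phrases this as absolute/uniform convergence rather than invoking dominated convergence by name, but the mechanism is identical. Your explicit remarks on the strict positivity of the limiting denominators are a minor tightening of a point the paper leaves implicit.
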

\begin{proof}
Assuming that the distribution~$\rho_1$ has a finite first moment: $|\int \xi\rho_1(\xi)\dd{\xi}|<C_1$ 
and that the density of~$\rho_0$ is bounded: $\rho_0(x)<C_2$, $\forall x\in(-\infty, \infty)$, 
we obtain that 
the 
integrand functions
in the numerator and denominator
in the Eq.~\eqref{eq:vtrue:sigma}
can be bounded  
by the following integrable 
functions 
independent of~$\sigma_s$ and~$t$:
$$
\rho_0\left(
\frac{x-x_1 t}{1+\sigma_s-t}
\right)
\rho_1(x_1)
<
C_1\rho_1(x_1)
$$
and
$$
\begin{aligned}
0&\leq
x_1
\rho_0\left(
\frac{x-x_1 t}{1+\sigma_st-t}
\right)
\rho_1(x_1)
<
x_1
C_1\rho_1(x_1),&
x&\geq 0,
\\
0&>
x_1
\rho_0\left(
\frac{x-x_1 t}{1+\sigma_st-t}
\right)
\rho_1(x_1)
>
x_1
C_1\rho_1(x_1),
&
x&<0.
\end{aligned}
$$
It follows that both integrals in expression~\eqref{eq:vtrue:sigma} converge absolutely and uniformly.
So,
we can swap the operations of taking the limit and integration, 
and 
we can take the limit $\sigma_s\to0$ 
in the integrand
for any time~$t\in[0,\,t_0]$ for arbitrary~$t_0<1$.

Now, let us consider the case $t=1$.
From Assumption~\ref{ass:rho_1}
the boundedness of the density~$\rho_1$
follows: $\rho_1(x)<C_2$, $\forall x\in(-\infty,\,\infty)$.
Thus, 
integrand functions
in the numerator and denominator
in the Eq.~\eqref{eq:v:t1sigma}
can be bounded  
by the following integrable 
functions 
independent of~$\sigma_s$:
$$
\rho_0(y)
\rho_1(x-y\sigma_s)
<
\rho_0(y)
C_2
$$
and
$$
\begin{aligned}
0&\leq
y
\rho_0(y)
\rho_1(x-y\sigma_s)
<
y 
C_2\rho_0(y),&
y&\geq0,\\
0&>
y
\rho_0(y)
\rho_1(x-y\sigma_s)
>
y 
C_2\rho_0(y),&
y&<0.
\end{aligned}
$$
The existence of the limit 
$$
\lim_{\sigma_s\to0}\rho_1(x-y\sigma_s)
=\rho_1(x),
$$
follows from Assumption~\ref{ass:rho_1}.

Finally, 
we conclude that formula~\eqref{eq:vtrue},
regarded as the limit~$\sigma_s\to0$ of the~\eqref{eq:vtrue:sigma}
at any~$t\in[0,\,1]$, is true.
\end{proof}

\begin{theorem}
The vector field in Eq.~\eqref{eq:vtrue}
delivers minimum to the Flow Matching objective (see the work~\cite{lipman2023flow}),
$$
\mathbb E_t\mathbb E_{x\sim\rho(x,t)}\norm{\overline v(x,t)-v(x,t)},
$$
where $\rho(x,t)$ and $\overline v(x,t)$ 
satisfy the equation~\eqref{eq:FP}
with the given densities~$\rho_0$ and~$\rho_1$.
\end{theorem}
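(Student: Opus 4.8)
The plan is to show that the vector field $v(x,t)$ from Eq.~\eqref{eq:vtrue} coincides (for Lebesgue-almost every $(x,t)$ with $\rho(x,t)>0$) with the marginal vector field $\overline v(x,t)$ that generates the probability path $\rho(x,t)$ induced by the conditional flow $\phic$ — and that the latter is exactly the unique minimizer of the Flow Matching objective. Since the objective is a squared $L^2$-distance $\mathbb E_t\mathbb E_{x\sim\rho(\cdot,t)}\|\overline v(x,t)-v(x,t)\|^2$, it is minimized (with value zero) precisely when $v=\overline v$ almost everywhere on the support of $\rho$; so the whole content is the identification $v=\overline v$.

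First I would recall the definition of the marginal path: for each fixed $x_1$, the conditional flow $\phic$ pushes $\rho_0$ forward to the conditional density $\rho_{x_1}(x,t)=[\phic]_*\rho_0(x)$, and the true marginal is $\rho(x,t)=\rhom(x,t)=\int\rho_{x_1}(x,t)\rho_1(x_1)\dd{x_1}$ as in Eq.~\eqref{eq:def_rhom}. The conditional flow satisfies the continuity equation with conditional velocity $w(t,x_1,x)=\phic'(\phic^{-1}(x))$, i.e.\ $\partial_t\rho_{x_1}=-\divtrue(\rho_{x_1}\,w(t,x_1,\cdot))$. Integrating this against $\rho_1(x_1)\dd{x_1}$ and interchanging $\partial_t$/$\divtrue$ with the integral (justified by the same domination argument used in Appendix~\ref{sec:App:expl} — finite first moment of $\rho_1$, bounded $\rho_0$, so the integrands and their $t$-derivatives are dominated uniformly), one gets
\begin{equation*}
\partial_t\rhom(x,t)=-\divtrue\!\left(\int w(t,x_1,x)\,\rho_{x_1}(x,t)\,\rho_1(x_1)\dd{x_1}\right)=-\divtrue\bigl(\rhom(x,t)\,v(x,t)\bigr),
\end{equation*}
where in the last step I divide and multiply by $\rhom(x,t)$ and recognize, via the definition $\rho_c(x|x_1,t)=\rho_{x_1}(x,t)\rho_1(x_1)/\rhom(x,t)$ from Eq.~\eqref{eq:cond_distr}, that $v(x,t)=\int w(t,x_1,x)\,\rho_c(x|x_1,t)\dd{x_1}$ — which for the linear map $\phic(x_0)=(1-t)x_0+tx_1$, with $w=(x_1-x)/(1-t)$ and $\rho_{x_1}(x,t)=\rho_0((x-x_1t)/(1-t))/(1-t)^d$, is exactly Eq.~\eqref{eq:vtrue}. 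Hence $(\rhom,v)$ solves the continuity equation~\eqref{eq:FP} with the prescribed boundary data $\rho_0,\rho_1$, so $v$ is \emph{a} valid Flow Matching vector field; plugging it into the objective gives zero, which is the global minimum of a nonnegative functional. The remark about $v(x,0)=\mathbb E_{x_1}x_1-x$ and $v(x,1)=x-\mathbb E_{x_0}x_0$ (Eq.~\eqref{eq:v:t0}) follows by taking the limits $t\to0,1$ under the integral, already justified in the preceding theorem via $\sigma_s\to0$.

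For uniqueness of the minimizer — needed to say $v$ \emph{is} \emph{the} minimizer rather than merely \emph{a} one — I would note that the objective is $\int_0^1\|\overline v(\cdot,t)-v(\cdot,t)\|_{L^2(\rho(\cdot,t))}^2\dd t$, a sum of norms, so any two minimizers agree $\rho$-a.e.\ for a.e.\ $t$; thus $v$ is the unique minimizer up to modification on a $\rho$-null set, which is the natural sense in which the statement should be read. The main obstacle is the rigorous interchange of differentiation/divergence with the $x_1$-integral and the handling of the endpoints $t=0,1$ where the map degenerates: the integrand $\rho_0((x-x_1t)/(1-t))$ develops a sharp peak as $t\to1$, so one must lean on exactly the regularized-map ($\sigma_s>0$) argument and dominated-convergence bounds established in Appendix~\ref{sec:App:expl}, taking $\sigma_s\to0$ only after all integrals are in place. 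Everything else is bookkeeping: once the continuity equation for $(\rhom,v)$ is verified, the minimization claim is immediate from nonnegativity of the quadratic objective.
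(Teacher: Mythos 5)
Your proposal is correct and follows essentially the same route as the paper: identify the field of Eq.~\eqref{eq:vtrue} with the marginal vector field generating the marginal path $\rhom$, and handle the degenerate endpoint $t=1$ via the regularized map $\sigma_s>0$ with the limit $\sigma_s\to0$ justified by domination (Assumption~\ref{ass:rho_1}). The only difference is presentational: where the paper invokes Theorem~1 of~\cite{lipman2023flow} for the fact that the marginalized conditional field generates the marginal probability path, you re-derive it directly by integrating the conditional continuity equation against $\rho_1(x_1)\dd{x_1}$, which is exactly the content of that cited result.
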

\begin{proof}
The proof is based on the previous statements 
and on a Theorem~1 from~\cite{lipman2023flow} 
(that the marginal vector field based on conditional vector fields 
generates the marginal probability path based on conditional  probability paths.

To complete the proof,
we must justify that, with $\sigma_s$ tending to zero, 
the marginal path at $t=1$ coincides with a given probability~$\rho_1$.

Consider the marginal probability path~$p_t(x,t)$
\begin{equation}
p_t(x,t)=
\int p_t(x|x_1,\sigma_s)\rho_1(x_1)\dd x_1
\label{eq:App:someproof}
\end{equation}
where $p_t(x|x_1,\sigma_s)$ is 
conditional  probability paths obtained by regularized linear conditional map.
Distribution $p_t$  in the time~$t=0$ is equal to standard normal distribution $p_0(x|x_1,\sigma_s)=\mathcal N(x\mid 0, 1)$
and at the time~$t=1$ it is a stretched Gaussian centered at~$x_1$: $p_1(x|x_1,\sigma_s)=\mathcal N(x\mid x_1, \sigma_s I)$.

Substituting~$p_1$ into the Eq.~\eqref{eq:App:someproof} and 
considering that there exists a limit~$\sigma_s\to0$ due to Assumption~\ref{ass:rho_1},
we obtain
$$
p_1(x)=
\lim_{\sigma_s\to0}
\int p_t(x|x_1,\sigma_s)\rho_1(x_1)\dd x_1=
\rho_1(x_1).
$$
This finish the proof.
\end{proof}

\subsubsection{Learning procedure for \texorpdfstring{$\sigma_s>0$}{ss>0}}
Using standard normal distribution as initial density $\rho_0$,
and the regularized map
$\phic(x_0)=(1-t)x_0+tx_1+\sigma_stx_0$
we obtain the following
approximation formula
\begin{equation*}
v^d(x,\,t)
    =
    \frac{\sum_{k=1}^N
       \frac{\ox^k - x(1-\sigma_s)}{1-t(1-\sigma_s)} 
        \exp(Y^k)
    }{\sum_{k=1}^N\exp(Y^k)},
    \qq{where}
    Y^k = -\frac12
    \frac{\norm{x- t \cdot \ox^k}^2_{\mathbb R^d}}{1-t(1-\sigma_s)}.
    \label{eq:main_numeric:sigma}
\end{equation*}
In practical applications, 
the exponent calculation is replaced by the $\softmax$ function calculation, 
which is more stable.

\section{Estimation of integrals}
\label{sec:Int-est}
In general,
we need to estimate the following expression
\def\underint{f(x_1,\,\eta)\rho_1(x_1)
\dd{x_1}}
\begin{equation*}
I(\eta)=
    \frac{
    \int
w(x_1,\,\eta)
\underint
    }{
    \int
\underint    
    }.
\end{equation*}
In particular,
substituting $\eta\to\{x,t\}$,
$w(x,\eta)\to(x_1-x)/(1-t)$
we obtain formula~\eqref{eq:vtrue} 
and similar ones with similar substitutions.

If we can sample from the $\rho_1$
distribution, we can estimate this integral in two ways: \emph{self-normalized importance sampling} and \emph{rejection sampling}.

Let
$\mathcal X=\{x_1^k\}_{k=1}^N$ 
be $N$  samples from the distribution~$\rho_1$.

\paragraph{Self-normalized Importance Sampling}
In this case
  \def\underint{f(x_1^k,\,\eta)\rho_1(x_1^k)
\dd{x_1}}
\begin{equation}
I(\eta)\approx
    \frac{
    \sum\limits_{k=1}^N
w(x_1^k,\eta)
\underint
    }{
    \sum\limits_{k=1}^N
\underint    
    }.
\label{eq:A:int_est}
\end{equation}  
This estimate is biased in theory, but there several methods to reduce this bias and improve this estimate, see, for example, 
\cite{NEURIPS2022_04bd683d}.
Our numerical experiments generally show that the estimation~\eqref{eq:A:int_est} in the form is already sufficient for stable results; we don not observe any bias.

\paragraph{Rejection sampling}
Let
$\mathcal Y=\{y^k\}_{k=1}^{M}\subset\mathcal X$ 
be a subset of the the initially given set of samples,
which is formed according to the following rule. 
Let $C=\sup_x\rho_1(x)$.
For a given sample~$x_1^j$
we generate a random uniformly distributed variable~$\xi_j\sim\mathcal U(0,1)$
and if
$$
f(x_1^j) \geq C\xi_j,
$$
then we put the point $x_k^j$
to the set $\mathcal Y$; otherwise we reject it.

Having formed the set $\mathcal Y$,
we evaluate the integral as 
\begin{equation*}
    I(\eta)
\approx
\frac1M
\sum_{k=1}^M
w(y^k,\eta).
\end{equation*}
To justify
the last estimation,
we note, that the 
points from the set~$\mathcal Y$
are distributed
according to (non-normalized)
density~$\rho(x)f(x,\eta)\rho_1(x)$.
One can show it using the proof
of the rejection sampling method.
This is the same density as in Eq.~\eqref{eq:cond_distr}
and thus we estimate the expression~\eqref{eq:vtru_comm}
using Important Sampling 
without any additional denominator.

\paragraph{Comparison}

When we apply these techniques to evaluating the expression for the vector field, 
we know that when the time parameter~$t$ is close to~$1$, 
the function~$f(x_1,\eta)$ (which is a scaled $\rho_0$) has a peak at the point $x=x_1$.
This means that only a small number of points from the original set will end up in the set~$\mathcal Y$. 
Moreover, in the case when the time~$t$ is very close to one and the data are well separated,
only one point~$x_1$ will end up in~$\mathcal Y$. 
This explains why we initially put this point in the set~$\mathcal X$, 
because otherwise it would be possible that the set~$\mathcal Y$ is empty and~$M=0$.

As a future work, 
we indicate a theoretical finding of the probability of hitting a particular point~$x_1$ 
in the set~$\mathcal Y$ and, 
thus, 
a modification of our algorithm, when the sample~$x_1$
will not always go to the set~$\mathcal X$,
but with some probability --- the greater the~$t$ the closer this probability to~$1$.

\section{The main Algorithm and extensions and generalization of the exact expression}
\label{sec:ex:A}
\begin{algorithm}[t!bh]
    \caption{Vector field model training algorithm}
    \label{alg:main}
	\begin{algorithmic}[1]
        \REQUIRE{Sampler from distribution~$\rho_1$ (or a set of samples);
            parameters $n$ and $m$ (number of spatial and time points, correspondingly);
            parameter~$N$ (number of averaging point);
            model~$v_\theta(x,t)$;
            algorithm with parameters for SGD
        }
        \ENSURE{quasi-optimal parameters~$\theta$ for the trained model}
        \STATE{Initialize $\theta$ (maybe random)}
        \WHILE{exit condition is not met}
        \STATE{Sample $m$ points $\{t^j\}$ from  $\mathcal U[0,1]$}
        \STATE{Sample $n$  points pairs  $\{x_0^i,x_1^i\}_{i=1}^n$ from joint distribution $\pi$ 
        ($\pi(x_0,x_1)=\rho_0(x_0)\rho_1(x_1)$ if variables are independent) }
        \STATE{Sample $N-n$ points $\{\hat x_1^l\}$ from  $\rho_1$
and form $\{\ox^k\}=\{x_1^i\}\cup\{\hat x_1^l\}$
\comm{We can take all available samples as $\{\ox^k\}$
if we don't have access to a sampler, but only ready-made samples.}
        }
        \STATE{For all $i$ and $j$ calculate the sum at the right side of~\eqref{eq:discrete_loss_our}
        (using~\eqref{eq:main_numeric} if $\rho_0$ is standard Gaussian
        or~\eqref{eq:main_numeric:general} in general)}
        \STATE{Calculate the sum on $i$ and $j$  in discrete loss~\eqref{eq:discrete_loss_our},
            and take backward derivative,
            obtaining approximate grad~$G\approx\grad_\theta\Lour$ of loss~$\Lour$ on model parameters~$\theta$.
        }
        \STATE{Update model parameters~$\theta\gets SGD(\theta, G)$}
        \ENDWHILE
	\end{algorithmic}
\end{algorithm}

General form of the proposed Algorithm is given in Alg~\ref{alg:main}.

When using other maps, formula~\eqref{eq:vtrue} is modified accordingly.
For example, if we use the regularized map
$\phic(x_0)=(1-t)x_0+tx_1+\sigma_stx_0$,
we get the formula~\eqref{eq:vtrue:sigma}.
Note, that in this case 
the final density~$\rho(x,\,1)$,
obtained from the continuity equation
is not equal to~$\rho_1$, but is its smoothed modification.

When using a different initial density~$\rho_0$ (not the normal distribution), 
an obvious modification will be made to formula~\eqref{eq:main_numeric}.

\paragraph{Diffusion-like models}
We can treat so-called Variance Preserving \cite{ho2020denoising} model as
CFM with the map
$$
\phic(x)=
\alpha_{1-t}x+\sqrt{1-\alpha^2_{1-t}}x_1.
$$
and $\rho_0$ as standard normal distribution: $\rho_0=\normald01$ 
In this case, the common expression~\eqref{eq:vtru_comm} for vector filed
transforms to
\begin{equation}
v(x,t)=
\frac{
\int
(x\alpha_{1-t}-x_1)\alpha'_{1-t}\,
\rho_0\left(
\frac{x-x_1 \alpha_{1-t}}{\sqrt{1-\alpha^2_{1-t}}}
\right)
\rho_1(x_1)
\dd{x_1}}{
( 1- \alpha^2_{1-t} )
\int
\rho_0\left(
\frac{x-x_1 \alpha_{1-t}}{\sqrt{1-\alpha^2_{1-t}}}
\right)
\rho_1(x_1)
\dd{x_1}
}{
}
,
    \label{eq:vtrue:VP}
\end{equation}
where $\alpha'_s=\dv{\alpha_s}{s}$.

Similarity
we can treat so-called Variance Exploding \cite{song_generative_2019} model as
CFM with the map
$$
\phic(x)=
\sigma_{1-t}x+x_1.
$$
and $\rho_0$ also as standard normal distribution: $\rho_0=\normald01$ 
In this case, the common expression~\eqref{eq:vtru_comm} for vector filed
transforms to
\begin{equation}
v(x,t)=
\frac{
\int
(x_1-x)\sigma'_{1-t}\,
\rho_0\left(
\frac{x-x_1 }{\sigma_{1-t}}
\right)
\rho_1(x_1)
\dd{x_1}}{
\sigma_{1-t}
\int
\rho_0\left(
\frac{x-x_1 }{\sigma_{1-t}}
\right)
\rho_1(x_1)
\dd{x_1}
}{
}
,
\label{eq:vtrue:VE}
\end{equation}
where $\sigma'_s=\dv{\sigma_s}{s}$.

\paragraph{Joint Distribution}
Moreover, 
in addition to the independent densities $x_0\sim\rho_0$ and $x_1\sim\rho_1$, 
we can use the joint density $\{x_0,\,x_1\}\sim\pi(x_0,\,x_1)$.
In the papers~\cite{Tong2024,tong2024improving},
optimal transport (OT) and Schrödinger's bridge are taken as~$\pi$.
In this case the expression for the vector field changes insignificantly:
the conditional probability~$\rho_c$ from Eq.~\eqref{eq:cond_distr}
is subject to change:
\begin{equation}
    \rho_{c}(x|x_1,t)
    = 
    \frac{\pi\left(\phic^{-1}(x),x_1\right)
    \det[\pdv{\phic^{-1}(x)}{x}]
    }{
        \int
        \pi\left(\phic^{-1}(x),x_1\right)
        \det[\pdv{\phic^{-1}(x)}{x}]
        \dd {x_1}
    }
    .
\label{eq:cond:general}
\end{equation}
Then, Eq.~\eqref{eq:vtru_comm}
remains the same in general case.
In the case of linear~$\phi$, 
the extension of Eq.~\eqref{eq:vtrue}
reads
\begin{equation}
v(x,t)
=
\dfrac{
\int
(x_1-x)\,
\pi\!\left(
\phic^{-1}(x)
,x_1\right)
\det[\pdv{\phic^{-1}(x)}{x}]
\dd{x_1}}{
( 1-t )
\int
\pi\!\left(
\phic^{-1}(x)
,x_1\right)
\det[\pdv{\phic^{-1}(x)}{x}]
\dd{x_1}}{
}
.
\label{eq:vtrue:pi}
\end{equation}

In all of the above cases,
the essence of Algorithm~\ref{alg:main} does not change
(except that in the case of dependent~$x_0$ and $x_1$ 
we should be able either to calculate the value of
$\pi\!\left(
\phic^{-1}(x)
,x_1\right)/\rho_1(x_1)$ or to estimate it).


\section{Several analytical results, following from the explicit formula}
\label{sec:anal}
In this section, we present several analytical results that directly follow from our exact formulas for the vector field,
which,
to the best of our knowledge, have not been published before.
\subsection{Exact path from one Gaussian to another Gaussian}
\label{sec:A:G-G}
Consider the flow from a one-dimensional Gaussian distribution 
$\rho_0\sim\normald{\mu_0}{\sigma_0}$
into another (with other parameters) Gaussian distribution
$\rho_1\sim\normald{\mu_1}{\sigma_1}$.
Note that in this case the generalization to the multivariate case is done directly, so the spatial variables are separated.

From the general formula~\eqref{eq:vtrue}
we have:
\def\underint{
\normaldx{\mu_0}{\sigma_0}{\frac{x-tx_1}{1-t}}
\normaldx{\mu_1}{\sigma_1}{x_1}
\dd x_1}
\begin{multline*}
v(x,t) = \frac{\int (x_1 - x)\underint}{(1-t)\int\underint}
=\\
\def\underint{\exp\left(-\bigl(\frac{x-tx_1}{1-t}-\mu_0\bigr)^2/(2\sigma_0^2)-(x_1-\mu_1)^2/(2\sigma_1^2)\right)\dd x_1}
=\frac{\int (x_1 - x)\underint}{(1-t)\int\underint}.
\end{multline*}
Both integrals in the last expression are taken explicitly:
\begin{multline*}
\int\underint
 =\\=
\frac{\exp \left(-\frac{(x-\mu_0 (1-t)-\mu_1 t)^2}%
{2 \left(\sigma_1^2 t^2+
\sigma_0^2 (1-t)^2\right)}\right)}{\sqrt{2 \pi }
 \sqrt{\sigma_0^2+\frac{\sigma_1^2t^2}{(t-1)^2}}}
=
\nosqrt\normaldx {\frac{\mu_0(1-t) + \mu_1 t}{1-t}}%
{{\sigma_0^2+\frac{\sigma_1^2t^2}{(t-1)^2}}}%
{\frac x{1-t}}.
\end{multline*}
Note that the last relation can be obtained as a distribution of two Gaussian random variables with corresponding parameters.

The second integral:
\begin{multline*}
\int\frac{x_1-x}{1-t}\underint
 =\\=
\frac{\exp \left(-\frac{(x-\mu_0 (1-t)-\mu_1 t)^2}%
{2 \left(
\sigma_1^2 t^2+
\sigma_0^2 (1-t)^2\right)}\right)}{\sqrt{2 \pi }}
\frac{(1-t) \left(\sigma_1^2 t (x-\mu_0)+\sigma_0^2 (t-1)
   (x-\mu_1)\right)}{\left(\sigma_1^2 t^2+\sigma_0^2
   (1-t)^2\right)^{3/2}}.
\end{multline*}

Thus, in the considered case we can explicitly write the expression for the vector field~$v$:
\begin{equation}
v(x,t) = 
\frac{\sigma_1^2 t (x-\mu_0)-\sigma_0^2 (1-t)
   (x-\mu_1)}%
{\sigma_1^2 t^2+\sigma_0^2
   (1-t)^2}.
\label{eq:vtrue:G-G}   
\end{equation}
For this vector field we can explicitly solve the equation for the path~$x(t)$ starting from the arbitrary point~$x_0$
\begin{equation*}
\left\{
\begin{aligned}
\pdv{x(t)}{t} &=  v(x(t),\,t), \\
    x(0) &= x_0 \\
\end{aligned}
\right.
.
\end{equation*}
The solution is:
\begin{equation}
x(t)=(1-t)\mu_0  + t\mu_1 + (x_0-\mu_0)\sqrt{ (\sigma_1/\sigma_0)^2 t^2 +(1-t)^2}.
\label{eq:traj:G-G}
\end{equation}
Note that although this solution does not correspond to the Optimal Transport joint distribution, 
since the  obtained path is not a straight line in general, 
(\ie we do not have a solution to the Kantorovich's formulation of the OT problem)
the endpoint 
$x(1)=\mu_1+(x_0-\mu_0)\dfrac{\sigma_1}{\sigma_0}$ 
falls exactly in the one that is optimal if we solve the OT problem in the Monge formultation.
Thus, the map $x(0)\to x(1)$ is the OT map for the case of 2 Gaussian.

\begin{figure}[!ht]%
\centering
\subfigure[Trajectories]{
\includegraphics[clip,width=0.6\linewidth]{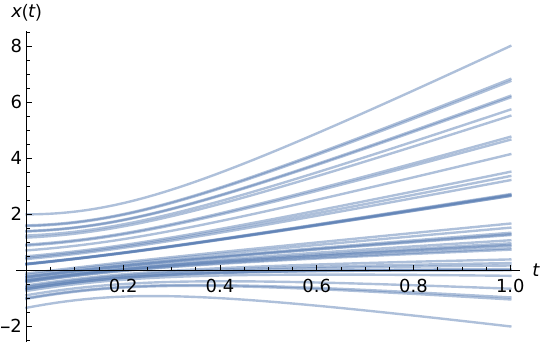}
}
\subfigure[Vector field]{
\includegraphics[clip,width=0.36\linewidth]{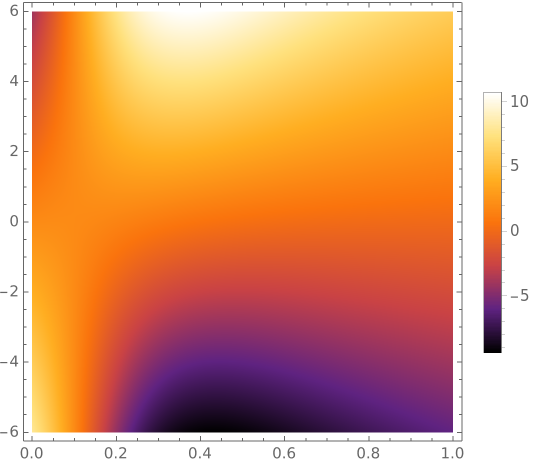}
}
\caption{a) $N=40$ random trajectories from from $\normald01$ to $\normald23$;
(b) 2D plot of the vector field in this case}
\label{fig:G-G}
\end{figure}

See the Fig.~\ref{fig:G-G}
for the examples of the paths for the obtained solution.

\subsection{From one Gaussian to Gaussian Mixture}
\label{sec:vtrue:G-GG}
Let initial distribution be standard Gaussian
$\rho_0=\normald 01$,
and the target distribution be Gaussian Mixture (GM)
of two symmetric Gaussians:
$\rho_1(x)=1/2(\normaldx\mu\sigma x)+ \normaldx{-\mu}\sigma x)$,
In this case, we can obtain exact form for~$v$
\begin{multline}
v(x,t) = 
\frac{\exp \left(-\frac{\mu ^2}{2 \sigma ^2}+\frac{\mu ^2 t^2+x^2}{\sigma ^2
   t^2+(t-1)^2}-\frac{x^2}{2 (t-1)^2}\right) }%
{\left(\sigma ^2 t^2+(t-1)^2\right)
   \left(e^{\frac{\left(x-\mu  t\right){}^2}{2 \left(\sigma ^2
   t^2+(t-1)^2\right)}}+e^{\frac{\left(\mu  t+x\right){}^2}{2 \left(\sigma ^2
   t^2+(t-1)^2\right)}}\right)} \times\\
\left[\mu  (t-1) \left(\exp \left(\frac{\left(\mu 
   (t-1)^2-\sigma ^2 t x\right){}^2}{2 \sigma ^2 (t-1)^2 \left(\sigma ^2
   t^2+(t-1)^2\right)}\right)-\exp \left(\frac{\left(\mu  (t-1)^2+\sigma ^2 t x\right){}^2}{2
   \sigma ^2 (t-1)^2 \left(\sigma ^2 t^2+(t-1)^2\right)}\right)\right)+\right.
   \\+\left.
   x \left(\sigma ^2
   t+t-1\right) \left(\exp \left(\frac{\left(\mu  (t-1)^2-\sigma ^2 t x\right){}^2}{2 \sigma ^2
   (t-1)^2 \left(\sigma ^2 t^2+(t-1)^2\right)}\right)+\exp \left(\frac{\left(\mu  (t-1)^2+\sigma
   ^2 t x\right){}^2}{2 \sigma ^2 (t-1)^2 \left(\sigma ^2
   t^2+(t-1)^2\right)}\right)\right)\right],
   \label{eq:vtrue:G-GG}
\end{multline}
but the expression for the path~$x(t)$ is unknown.

\begin{figure}[!ht]%
\centering
\subfigure[Trajectories]{
\includegraphics[clip,width=0.6\linewidth]{G-GM.pdf}
}
\subfigure[Vector field]{
\includegraphics[clip,width=0.36\linewidth]{G-GM-dens.png}
}
\caption{a) $N=80$ random trajectories from $\normald01$ to GM of $\normald{-2}{1/2}$ and $\normald2{1/2}$;
(b) 2D plot of the vector field in this case}
\label{fig:G-GM}
\end{figure}

Numerically solution of the differential equation with the obtained vector field
give
the trajectories shown in Fig.~\ref{fig:G-GM}.

\subsection{From Gaussian to Gaussian with stochastic}
Using Eq.~\eqref{eq:vtrue:SDE:1}-\eqref{eq:strue:SDE:1}
we can explicitly calculate vector field~$v$
and score~$s$
with the setup as in Sec.~\ref{sec:A:G-G}
but with additional noise,
\ie in the stochastic case.

\subsubsection{Gaussian to Gaussian with noise}

Consider like in the Sec.~\ref{sec:A:G-G}
the flow from a one-dimensional standard  Gaussian distribution
$\rho_0\sim\normald01$
into another (with other parameters) Gaussian distribution
$\rho_1\sim\normald{\mu_1}{\sigma_1}$
but with additional noise as described above.

In this case we have for the field.
\begin{equation}
v(x,\,t)=
    \frac{
    x\bigl( t\sigma_1^2 + (1-t)\se^2/2  \bigr)
    -(x-\mu_1)\bigl((1-t)+t\se^2/2\bigr)
}{
t(1-t)\se^2+\sigma_1^2t^2+(1-t)^2
}
\label{eq:vtrue:G-G:SDE}
\end{equation}

We can solve ODE with this field and get the expression for the trajectories,
starting from the given point~$x_0$:
\begin{equation}
x(t)=
\mu _1 t+
x_0 \sqrt{t(1-t)\sigma_e^2+\sigma_1^2t^2+(1-t)^2}.
\label{eq:traj:G-G:SDE}
\end{equation}
These trajectories, for different~$x_0$ are depicted in Fig.~\ref{fig:G-GM:SDE}.
\begin{figure}[!ht]%
\centering
\def\pc#1{
\subfigure[\small Trajectories, $\sigma_e=#1$]{
\includegraphics[clip,width=0.29\linewidth]{traj_se=#1.pdf}
}
\subfigure[\small Vector field, $\sigma_e=#1$]{
\includegraphics[clip,width=0.15\linewidth]{img/field_se=#1.png}
}
}
\pc{0.3}
\pc{1}
\pc{3}
\pc{10}
\caption{a) $N=40$ random trajectories from $\normald01$ to $\normald23$
and
2D plot of the vector field in this case
for different~$\sigma_e$
}
\label{fig:G-GM:SDE}
\end{figure}

At the limit~$\se\to0$
expressions~\eqref{eq:vtrue:G-G:SDE}
and~\eqref{eq:traj:G-G:SDE}
turn into 
expressions~\eqref{eq:vtrue:G-G}
and~\eqref{eq:traj:G-G}
as expected.

For the score~$s$
in the considered case
we have
\begin{equation*}
    s(x,\,t)=
    \frac{
    t\mu_1-x
    }{
    (1-t)^2
    + t(1-t)\se^2
    + t^2\sigma_1^2
    }
\end{equation*}

Thus,
we can explicitly write expressions
for the stochastic process for the evolution
from the initial distribution $rho_0$ (standard Gaussian) to the final distribution~$\rho_1$:
\begin{multline*}
\dd x(x)=
\left[
    \frac{
    x\bigl( t\sigma_1^2 + (1-t)\se^2/2  \bigr)
    -(x-\mu_1)\bigl((1-t)+t\se^2/2\bigr)
}{
t(1-t)\se^2+\sigma_1^2t^2+(1-t)^2
}
\right.
+{}
\\
\left.
{}+
\frac{
g^2(t)
}2
   \frac{
    t\mu_1-x
    }{
    (1-t)^2
    + t(1-t)\se^2
    + t^2\sigma_1^2
    }
\right]
\dd t+
g(t)
\dd{W(t)}.
\end{multline*}
Here~$g(t)$ is arbitrary smooth function.
In the case of Shr\"odinger Bridge we take $g(t)=\se\sqrt{t(1-t)}$.

%
%

\section{Detail on the SDE case}
\label{sec:SDE}
\subsection{Optimal vector field and score for stochastic map}
Following~\cite{Tong2024}
we consider a so-called \emph{Brownian bridge}~$B(t)$ from  
$x_0$ to $x_1$ 
with constant diffusion rate~$\se$.
This stochastic process
can be expressed through a multidimensional standard Winner process~$W(t)$ as
\begin{equation}
    B(t\mid x_0,x_1)=(1-t)x_0 + tx_1
    + \se(1-t)W\left(\frac t{1-t}\right).
\label{eq:A:Bb}
\end{equation}
Thus,
the conditional distribution~$p(t,x\mid x_0,\,x_1)$
conditioned on the starting~$x_0$ and end point~$x_1$
is Gaussian:
\begin{equation*}
p(x,t\mid x_0,\,x_1)
=\nosqrt\normaldx {(1-t)x_0 + tx_1}{\se^2t(1-t)}x.
\end{equation*}
We can not directly use the results
Theorem 3 from~\cite{lipman2023flow}
(or similar Theorem 2.1 from~\cite{tong2024improving} )
for the Gaussian paths,
as in this case $\sigma(0)=0$.
To circumvent this obstacle and to be able to write an expression for the conditional velocity, 
we assume that we have a Gaussian distribution with a very narrow peak at the initial ($t=0$) and final ($t=1$)
points. 
In other words, we will consider conditional probabilities of the form
\begin{equation}
p(x,t\mid x_0,\,x_1)
=\nosqrt\normaldx {(1-t)x_0 + tx_1}{\se^2(t+\eta)(1-t+\eta)}x,
\label{eq:SDE:pt}
\end{equation}
where parameter~$\eta$ is  small enough.
Then we can use the above Theorems and immediately write
\begin{equation}
    v_{x_0,x_1}(x,t)=
    \frac{\sigma'(t)}{\sigma(t)}\bigl(x - \mu(t)\bigr)+\mu'(t)
    =
    \frac{1-2t}{2(t+\eta)(1-t+\eta)}\bigl(x - (1-t)x_0 - tx_1\bigr)+x_1-x_0.
\label{eq:v:SDE}
\end{equation}
After integrating over~$x_0$ and~$x_1$, we can take the limit $\eta\to0$.
Thus, now for fixed~$x_0$ and~$x_1$ we do not have a fixed value of~$x_t$ in which to train the model,
but a random one. 
In general case, we end up to the loss:
\begin{equation}
\mathcal L_v=
        \mathbb E_{
        t\sim\mathcal U(0,1),\,
        \{x_1,x_0\}\sim\pi,\,
        x\sim p(\cdot,t\mid x_0,x_1)}
        \norm{
        \model(x,t)-
        v_{x_0,x_1}(x,t)
    }^2,
\label{eq:L:SDE}
\end{equation}
where~$\pi(x_1,\,x_0)$
is the density of the joint distributions with the marginal equal to the two given probabilities:
\begin{equation*}
\int\pi(x_1,\,x_0)\dd{x_1}=\rho_0(x_0),
\quad
\int\pi(x_1,\,x_0)\dd{x_0}=\rho_1(x_1)
.
\end{equation*}

In the simple case, $\pi(x_1,\,x_0)=\rho_0(x_0)\rho_1(x_1)$.
Vector field in Eq.~\eqref{eq:L:SDE}
if taken in the form of Eq.~\eqref{eq:v:SDE}.


Now, we can obtain an explicit form for the vector field~$v$ at which the written loss is reached its minimum
by performing the same calculations as in the derivation of formula~\eqref{eq:vtru_comm}:
\def\underint{p(x,t\mid x_0,\,x_1)\,\pi(x_0,x_1)
\dd{x_0}%
\dd x_1%
}
\begin{equation}
v(x,t)
=
\dfrac{
\iint
v_{x_0,x_1}(x,t)\,
\underint}{\iint
\underint}{
}
.
\label{eq:vtrue:SDE:1}
\end{equation}

As in the work~\cite{Tong2024} we can also train score network.
Namely, as marginals for Brownian bridge are Gaussian,
we can write explicit conditional score
for conditional probabilistic path 
\begin{equation*}
\grad\log p(x,t\mid x_0,\,x_1)=
\frac{\mu(t)-x}{\sigma^2_e(t)}=
\frac{x_0(1-t) + x_1t-x}{\se^2t(1-t)}.
\end{equation*}
In the work~\cite{Tong2024}
the following loss is introduced to train a model for this score
\begin{equation}
\mathcal L_s=
        \mathbb E_{
        t\sim\mathcal U(0,1),\,
        \{x_1,x_0\}\sim\pi,\,
        x\sim p(\cdot,t\mid x_0,x_1)}
        \norm{
        s_\theta(x,t)-
        \nabla\log p(x,t\mid x_0,\,x_1)
    }^2.
\label{eq:Ls:SDE}
\end{equation}

Similar to~\eqref{eq:vtrue:SDE:1}, for the optimal score~$s$ we have:
\def\underint{p(x,t\mid x_0,\,x_1)\,\pi(x_0,x_1)
\dd{x_0}%
\dd x_1%
}
\begin{equation}
s(x,t)
=
\dfrac{
\iint
\grad\log p(x,t\mid x_0,\,x_1)\,
\underint}{\iint
\underint}{
}
,
\label{eq:strue:SDE:1}
\end{equation}
where $p$ is given in~\eqref{eq:SDE:pt}.

\subsection{Use stochastic}
Note that the obtained vector field gives marginal distributions~$p(x,t)$, which (in the limit $\eta\to0$) 
at $t=1$ leads to the distribution we need: $p(x,t=1)=\rho_1(x)$. However, the addition of the stochastic term allows us to extend the scope of application of the explicit formula for the vector field. In particular, it can be applied to the situation when we have two sets of samples and both distributions are unknown, as well as the possibility of constructing SDE and solving it using, for example, the Euler--Maruyama method (see examples below).

As consequence of Theorem 3.1 from~\cite{Tong2024} we have that,
if $v$ is given by Eq.~\eqref{eq:vtrue:SDE:1}
then ODE 
\begin{equation}
    \pdv{\rho(x,t)}{t}=-\divtrue\bigl(\rho(x,t)v(x,t)\bigr)
\label{eq:A:Ph-P:1}
\end{equation}
recovers the marginal~$\rho(x,t)$ (with the given initial conditions)
of the stochastic process~$P(t)$ which is obtained by marginalization 
conditional Brownian bridge~\eqref{eq:A:Bb}
over
initial and target distribution
\begin{equation*}
    P(t)=\int B(t\mid x_0,x_1)\pi(x_0,x_1)\dd{x_0}\dd{x_1}.
\end{equation*}

As the second consequence of this Theorem,
the SDE
\begin{equation}
   \dd{x(t)}
   =
   \Bigl(v\bigl(x(t),t\bigr)+\frac{g^2(t)}2s\bigl(x(t),t\bigr)\Bigr)\dd{t}
   +
   g(t)\dd{W(t)}
\label{eq:A:SDE:sp}
\end{equation}
generates so-called  Markovization of the process~$P(t)$.
Indeed, we can rewrite PDE Eq.~\eqref{eq:A:Ph-P:1}
in the form
\begin{equation*}
    \pdv{\rho(x,t)}{t}=-\divtrue\Bigl(
    \rho(x,t)v(x,t)
    +\frac{g^2(t)}2\grad\rho(x,t)
    \Bigr)
    +\frac{g^2(t)}2\Delta\rho(x,t),
\end{equation*}
where 
nabla operator is defined as~$\Delta=\divtrue\grad$.
Thus, we get the Fokker–Planck equation
for the density of the stochastic process~\eqref{eq:A:SDE:sp}.

\subsection{Particular cases}
In particular case of Brownian bridge when $\sigma_e(t)=\sigma_\epsilon\sqrt{t(1-t)}$, 
then
$\sigma_e'(t)=\sigma_\epsilon (1-2t)/\bigl(2\sqrt{t(1-t)}\bigr)$.
In this section we consider simple case of
separable variables
$\pi(x_0,x_1)=\rho_0(x_0)\rho_1(x_1)$.

\subsubsection{Gaussian initial distribution}
In the case, when $\rho_0$ is standard Gaussian distribution: $\rho_0=\normald01$,
we can take integral on~$x_0$  and then take the limit~$\eta\to0$
in the expressions for~$v$ and $s$.
First, consider the expression for~$v$:
where we use explicit expression~\eqref{eq:SDE:pt} for conditional density path
and 
Eq.~\eqref{eq:v:SDE} for conditional velocity:
\def\underint{%
    \nosqrt\normaldx{x_1t}{\se^2 t (1-t)+(1-t)^2}x
\rho_1(x_1)
\dd x_1%
}%
\def\wholeex{%
\dfrac{
\int
w(x,t\mid x_1)
\underint}{
\int
\underint}{
}%
}%
\begin{multline}
v(x,t)
=
\wholeex
=\\=
\def\underint{%
\rho_0
\Bigl(
\frac{x-x_1t}{\sqrt{\se^2 t (1-t)+(1-t)^2}}
\Bigr)
\rho_1(x_1)
\dd x_1%
}
\wholeex,
\label{eq:vtrue:SDE:G}
\end{multline}
where 
$w(x,t\mid x_1)$
is the conditional velocity, generated by the conditional map
$\phic(x)=\sqrt{\se^2 t (1-t)+(1-t)^2}+tx_1$:
\begin{equation*}
w(x,t\mid x_1)
=
\frac{x_1-x}{1-t+t\se^2}+
\se^2\frac{(1-2 t)x + t x_1}%
{2 \bigl((1-t)^2+
   (1-t)t\se^2\bigr)}.
\end{equation*}

Thus, note that
in the case of Gaussian distributions,
all the difference between this expression and the expression without the stochastic part is the appearance of additional (time-dependent, in general) variance.
Marginal distributions are still Gaussian's.

Similar, using Eq.~\eqref{eq:strue:SDE:1}
we have for the score~$s$:
\def\underint{%
    \nosqrt\normaldx{x_1t}{\se^2 t (1-t)+(1-t)^2}x
\rho_1(x_1)
\dd x_1%
}%
\def\wholeex{%
\dfrac{
\int
(t x_1-x)
\underint}{
\bigl((1-t)^2+
   (1-t)t\se^2\bigr)
\int
\underint}{
}%
}%
\begin{multline}
s(x,t)
=
\wholeex
=\\=
\def\underint{%
\rho_0
\Bigl(
\frac{x-x_1t}{\sqrt{\se^2 t (1-t)+(1-t)^2}}
\Bigr)
\rho_1(x_1)
\dd x_1%
}
\wholeex.
\label{eq:strue:SDE:G}
\end{multline}

\subsubsection{Samples instead of distributions}
\label{seq:SDE:samples}
Consider the case where we only have access to the samples 
$\{x^i_0\}_{i=1}^{N_0}$
and
$\{x^i_1\}_{i=1}^{N_1}$
from both distributions, $\rho_0$ and $\rho_1$,
but do not know their explicit expressions.
In this case, we can
estimate the vector field using
by a method similar to the one we used to estimate the vector field in~\eqref{eq:vd}:
\def\underint{p(x,t\mid x^i_0,\,x^j_1)
}
\begin{equation}
v(x,t)
\approx
\dfrac{
\sum_{i=1}^{N_0}
\sum_{j=1}^{N_1}
v_{x^i_0,x^j_1}(x,t)\,
\underint}{
\sum_{i=1}^{N_0}
\sum_{j=1}^{N_1}
\underint}
.
\label{eq:vtrue:approx:D}
\end{equation}

Similar for the score
\begin{equation}
s(x,t)
\approx
\dfrac{
\sum_{i=1}^{N_0}
\sum_{j=1}^{N_1}
\grad p(x,t\mid x^i_0,x^j_1)\,
\underint}{
\sum_{i=1}^{N_0}
\sum_{j=1}^{N_1}
\underint}
.
\label{eq:strue:approx:D}
\end{equation}

In addition, we can also use the importance sampling method in this case.
Namely we can use both approaches: self-normalized importance sampling and rejection sampling,
similar to what is described in Sec.~\ref{sec:Int-est}


\section{Consistency of Eq.\texorpdfstring{~\eqref{eq:main_numeric:general}}{(29)} in the case of optimal transport}
\label{sec:A:OT}
Let us analyze what happens if in formula~\eqref{eq:main_numeric:general}
the joint density~$\pi$
represents the following Dirac delta-function\footnote{Further reasoning is not absolutely rigorous, 
and in order not to introduce the axiomatics 
of generalized functions, 
we can assume that the delta function is the limit 
of the density of a normal distribution with mean~$0$ 
and variance tending to zero.}:
$$
\pi(x_0,x_1)
=
\delta\bigl(x_0-F(x_1)\bigr),
$$
\ie we have a deterministic mapping~$F$ from~$x_1$ to $x_0$.
Then, 
the Eq.~\eqref{eq:vtrue:pi} come to
\def\phinv#1#2{\frac{#1-tx_1}{1-t}}
\begin{equation*}
v(x,t)
=
\dfrac{
\int
(x_1-x)\,
\delta\!\left(
\phic^{-1}(x)
-
F(x_1)\right)
\dd{x_1}}{
( 1-t )
\int
\delta\!\left(
\phic^{-1}(x)
-
F(x_1)\right)
\dd{x_1}}{
}
.
\end{equation*}
Let~$y(x,t)$ be the unique solution of the equation
\begin{equation}
\phicex^{-1}ty(x)
=F(y),
\label{eq:App:eqOT}
\end{equation}
considered as an equation on~$y$.
Then
$$
v(x,t)
=
\frac{x-y(x,t)}{1-t}.
$$

Now, let us use linear mapping
$\phic(x)=x_1 t+x(1-t)$, 
with inverse
$\phic^{-1}(x)=\phinv x{x_1}$, 
and consider the simplest case when the original distribution is 
a $d$-dimensional standard Gaussian
and~$\rho_1$ is a $d$-dimensional Gaussian with mean~$\mu$ and diagonal 
variance~$\Sigma=\hbox{diag}(\sigma)$.
We know the OT correspondence
between Gaussians, 
namely
$$
\bigl(F(x_1)\bigr)_i= \frac{(x_1-\mu)_i}{\Sigma_{ii}},
\quad \forall 1\geq i\geq d
.
$$
Here and further by index~$i$
we denote $i$th component of the corresponding vector.
Then, the Eq.~\eqref{eq:App:eqOT} reads as
$$
\frac{(x-yt)_i}{1-t}=
\frac{(y-\mu)_i}{\Sigma_{ii}},
$$
with the solution
$$
\bigl(y(x,t)\bigr)_i=\frac{\mu_i(1-t) +x_i\Sigma_{ii}}{1+(\Sigma_{ii}-1) t}.
$$
Then the expression for the vector field is
$$
\bigl(v(x,t)\bigr)_i=
\frac{\mu_i +x_i(\Sigma_{ii}-1) }{1+(\Sigma_{ii}-1)t}.
$$
Now, 
knowing the expression for velocity, 
we can write the equations for the trajectories~$x(t)$:
$$
\left\{
\begin{aligned}
\bigl(x'(t)\bigr)_i&=
\frac{\mu_i +(x(t))_i(\Sigma_{ii}-1) }{1+(\Sigma_{ii}-1)t},
\\
x(0)_i&=(x_0)_i
\end{aligned}
\right.
.
$$
This equation have closed-form solution:
$$
x(t)=\mu t +x_0-\left(1-\sigma\right)tx_0.
$$
Analyzing the obtained solution, 
we conclude that, first, the trajectories
obey the given mapping~$F$:
$$
\bigl(F(x(1))\bigr)_i= (x_0)_i=\frac{(x(1)-\mu)_i}{\Sigma_{ii}},
$$

And, second, the trajectories are straight lines (in space),  
as they should be when the flow carries points along the optimal transport.

As a final conclusion, note that, 
of course, 
if we are mapping optimal transport~$F$, 
then it is meaningless to use numerical formula~\eqref{eq:main_numeric}. 
However, usually the exact value of the mapping~$F$
is not known, 
and our 
theoretical formula~\eqref{eq:vtrue:pi} 
can help to rigorously establish the error 
that is committed when an approximate mapping is used 
instead of the optimal one.

\section{Analytical derivations for example in Fig.~\ref{fig:CFM_ExFM_disps}}
\label{sec:G-G:A}
\subsection{CFM dispersion}
To derive the analytical expression for the optimal flow velocity in the case of two normal distributions 
$\rho_0 \sim N(0,I)$ and 
$\rho_1 \sim N(\mu,\sigma^2I)$,
we start by substituting
$\mu_0=0$, $\sigma_0=1$,
$\mu_1=\mu$, $\sigma_1=\sigma$,
to the exact expression~\eqref{eq:vtrue:G-G}
to get
\def\vpart{w}
\begin{equation}
v(x,t) =  \frac{t\sigma^2 + t -1}{(1-t)^2 + t^2\sigma^2} x + \frac{1}{(1-t)^2 + t^2\sigma^2}(\mu-t\mu)
=\vpart(t)x + C,
\label{eq:truev_gaus2}
\end{equation}
where 
$$
\vpart(t) = \frac{t\sigma^2 + t -1}{(1-t)^2 + t^2\sigma^2},
$$
and
$C$ is constant independent of $x$.
We then redefine the dispersion based on Eq.~\eqref{eq:def_disp} using $x = (1-t)x_0 + t x_1$ with $x_0 \sim \rho_0$ and $x_1 \sim \rho_1$:
\begin{equation}
\mathbb{D}_{x,x_1}f(x,\,x_1) = \mathbb{D}_{x_0,x_1}f\bigl((1-t)x_0 + tx_1,\,x_1\bigr) 
\end{equation}
This leads us to the final expression:
\begin{align*}    
\mathbb{D}_{x,x_1}\Delta v(x,t) &= \mathbb{D}_{x_0,x_1}((1-\vpart(t))x_1 -
(1 + \vpart(t)(1-t))x_0) =\\
&=
(1 + \vpart(t)(1-t))^2\mathbb{D}_{x_0}x_0+
(1-\vpart(t))^2\mathbb{D}_{x_1}x_1.
\end{align*}

This provides a comprehensive representation of the updated dispersion for the CFM objective at any given time~$t$.

\subsection{ExFM dispersion}

\begin{algorithm}[t!bh]
    \caption{Computation ExFM dispersion algorithm}
    \label{alg:ExFM_disp_numeric}
	\begin{algorithmic}[1]
 \REQUIRE{
 Density function for initial distribution $\rho_0$; sampler for target distribution $\rho_1$; parameter $M$ (number of samples for evaluation); parameter $N$ (number of samples from $\rho_1$ for certain samples $x \sim \rho_m(x,t)$); optimal model $v(x,t)$; time for evaluation $t$.
 }
 \ENSURE{
 numerical evaluation of dispersion update for ExFM objective
 }
 \STATE Sample $(M\cdot N)$ samples $x_1^{i,j}$ from $\rho_1$, where $i \in [1,M]$ and $j \in [1,N]$
 \STATE Sample $(M)$ samples $x_0^i$ from $\rho_0$, where $i \in [1,M]$
 \STATE Compute points $x^{i}$ as $(1-t)x_0^i + tx_1^{i,0}$
 \STATE Compute $v^d(x^i,t) = \sum\limits_{j=1}^N\Tilde{\rho}^{i,j}(t)\frac{x_1^{i,j}-x^i}{1-t}$, where $\Tilde{\rho}^{i,j}(t) = \rho_0\left(\frac{x^{i}-tx_1^{i,j}}{1-t}\right)/\sum\limits_{j=1}^N\rho_0\left(\frac{x^{i}-tx_1^{i,j}}{1-t}\right)$
 \STATE Compute and return dispersion $\mathbb{D}_i(v(x^i,t) - v^d(x^i,t))$
\end{algorithmic}
\end{algorithm}

The analytical derivation of the updated dispersion for the ExFM objective proves to be complex in practice. Therefore, for the example at hand, a numerical scheme was employed for evaluation. The procedure outlined in Alg.~\ref{alg:ExFM_disp_numeric} was utilized for this task. The experiment's parameters for the algorithm were as follows: $M=200k$, $N=128$, $\rho_0 = N(0,I)$, $\rho_1 = N(\mu,\sigma^2I)$, and the optimal model $v(x,t)$ was derived from equation~\eqref{eq:truev_gaus2}.


\section{Additional Experiments}
\label{seq:AppAdd:numeric}

\subsection{2D toy examples}
To ensure the reliability and impartiality of the outcomes, we carried out the experiment under uniform conditions and parameters. Initially, we generated a training set of batch size $N = 10{,}000$ points. The employed model was a simple Multilayer Perceptron with ReLu activations and 2 hidden layers of 512 neurons, \texttt{Adam} optimizer with a learning rate of $10^{-3}$, EMA with rate of $0.9$ and no learning rate scheduler. 
We determined the number of learning steps equal to $15\,000$ and $30\,000$ learning steps for \texttt{rings} dataset since the more comprehensive structure of the data. Subsequently, we configured the mini batch size $n = 512$ during the training procedure, with the primary objective of minimizing the Mean Squared Error (MSE) loss. The full training algorithm and notations can be seen in Algorithm~\ref{alg:main}.
To perform sampling, we employed the function \texttt{odeint} with \texttt{dopri5} method from the python package \texttt{torchdiffeq} with \texttt{atol} and \texttt{rtol} equal $10^{-5}$.
\begin{table}[!htb]
\centering
\def\g{\bf}
\caption{Energy Distance comparison for ExFM, CFM and OT-CFM methods for 2D-toy datasets for $15\, 000$ learning steps ($30\, 000$ learning steps for \texttt{rings} dataset), mean and std taken from 10 sampling iterations.}
\begin{sc}
\resizebox{0.9\linewidth}{!}{%
\begin{tabular}{lrrr}
\toprule
   Data & ExFM & CFM & OT-CFM \\
\midrule
 swissroll    & 1.20e-03 $\pm$ 9.6e-04 & 1.58e-03 $\pm$ 6.4e-04 & \g 8.28e-04 $\pm$ 3.12e-04 \\
 moons        & \g 5.58e-04 $\pm$ 3.45e-04 & 1.27e-03 $\pm$ 8.2e-04 & 6.99e-04 $\pm$ 4.38e-04 \\
 8gaussians   & \g 1.26e-03 $\pm$ 6.4e-04 & 1.62e-03 $\pm$ 6.0e-04 & 1.88e-03 $\pm$ 8.0e-04 \\
 circles      & \g 6.66e-04 $\pm$ 4.69e-04 & 8.34e-04 $\pm$ 4.72e-04 & 9.70e-04 $\pm$ 5.40e-04 \\
 2spirals     & \g 8.15e-04 $\pm$ 2.91e-04 & 1.91e-03 $\pm$ 7.7e-04 & 1.74e-03 $\pm$ 5.5e-04 \\
 checkerboard & \g 1.32e-03 $\pm$ 5.6e-04 & 3.41e-03 $\pm$ 1.19e-03 & 2.00e-03 $\pm$ 1.00e-03 \\
 pinwheel     & \g 8.65e-04 $\pm$ 6.12e-04 & 2.48e-03 $\pm$ 8.8e-04 & 1.11e-03 $\pm$ 3.2e-04 \\
 rings        & \g 5.75e-04 $\pm$ 3.61e-04 & 1.53e-03 $\pm$ 4.3e-04 & 1.19e-03 $\pm$ 3.6e-04 \\
        
\bottomrule
\end{tabular}
}
\end{sc}
\label{tab:CFM_ExFM_toy}
\end{table}

We present visual and quantitative results to evaluate the performance of our proposed method, ExFM. Visualizations of the learned distributions are presented in Figure \ref{fig:Extended_toy}. The corresponding data densities can be found in Figure \ref{fig:Extended_density_toy}. We sampled data from both the beginning and end of the training process. The results clearly show that ExFM outperforms the baseline CFM and the OT-CFM, particularly on the \texttt{rings} dataset. This can be attributed to ExFM's ability to effectively capture the complexities of this challenging distribution.

To further support the effectiveness of ExFM, we analyzed the training losses. The complete progression of these losses is visualized in Figure \ref{fig:images_2d_loss}. This figure highlights the significantly lower variance observed in ExFM's training loss compared to the CFM method.

For quantitative evaluation, we employed the Energy Distance metric and Wasserstein distance. The results of Energy Distance are presented in Table~\ref{tab:CFM_ExFM_toy}, Wasserstein distance in Table~\ref{tab:W_toy} while Figure \ref{fig:images_2d_ed} showcases the progression of this metric during the training procedure. Interestingly, CFM, OT-CFM and ExFM models achieve rapid convergence in terms of this metric at the beginning of learning. Additionally, the metric values remain relatively stable throughout the training process. However, the superior visual quality achieved by ExFM (as observed in Figure \ref{fig:Extended_toy}) suggests that the Energy Distance metric might not be the most suitable choice for evaluating this specific task.

\begin{figure}[!ht]%
\def\sbsub#1#2{\includegraphics[trim=0cm 0cm 0cm 0cm,width=0.13\linewidth]{#1_#2_1500}}%
\def\sbsubb#1#2{\includegraphics[trim=0cm 0cm 0cm 0cm,width=0.13\linewidth]{#1_#2_15000}}%
\def\sb#1#2{\subfigure[\scriptsize#2]{%
\hbox to 4.7em{\hss\vbox{
\sbsub{ExFM}{#1}\\
\sbsubb{ExFM}{#1}\\
\sbsub{CFM}{#1}\\
\sbsubb{CFM}{#1}\\
\sbsub{OT-CFM}{#1}\\
\sbsubb{OT-CFM}{#1}
}\hss}}%
}
\centering
\sb{swissroll}{swissroll}
\sb{moons}{moons}
\sb{8gaussians}{8gaussians}
\sb{circles}{circles}
\sb{2spirals}{2spirals}
\sb{checkerboard}{\hbox to 4.4em{\kern-0.5ex checkerboard}}
\sb{pinwheel}{\hbox to 3.0em{pinwheel}}
\sb{rings}{rings}
\caption{Visual comparison of methods on toy 2D data. First and second rows sampled by ExFM, third and fourth rows sampled by CFM, fifth and six rows sampled by OT-CFM. The upper row in pairs of the same method sampled after $1\,500$ learning iterations ($3\,000$ for \texttt{rings} dataset), the lower row in pairs of the same method sampled after $15\,000$ learning iterations ($30\,000$ for \texttt{rings} dataset).
}%
\label{fig:Extended_toy}%
\end{figure}

\begin{figure}[!ht]%
\def\sbsub#1#2{\includegraphics[trim=0cm 0cm 0cm 0cm,width=0.13\linewidth]{dense_#1_#2_1500}}%
\def\sbsubb#1#2{\includegraphics[trim=0cm 0cm 0cm 0cm,width=0.13\linewidth]{dense_#1_#2_15000}}%
\def\sb#1#2{\subfigure[\scriptsize#2]{%
\hbox to 4.7em{\hss\vbox{
\sbsub{ExFM}{#1}\\
\sbsubb{ExFM}{#1}\\
\sbsub{CFM}{#1}\\
\sbsubb{CFM}{#1}\\
\sbsub{OT-CFM}{#1}\\
\sbsubb{OT-CFM}{#1}
}\hss}}%
}
\centering
\sb{swissroll}{swissroll}
\sb{moons}{moons}
\sb{8gaussians}{8gaussians}
\sb{circles}{circles}
\sb{2spirals}{2spirals}
\sb{checkerboard}{\hbox to 4.4em{\kern-0.5ex checkerboard}}
\sb{pinwheel}{\hbox to 3.0em{pinwheel}}
\sb{rings}{rings}
\caption{Densities comparison of methods on toy 2D data. First and second rows sampled by ExFM, third and fourth rows sampled by CFM, fifth and six rows sampled by OT-CFM. The upper row in pairs of the same method sampled after $1500$ learning iterations ($3000$ for \texttt{rings} dataset), the lower row in pairs of the same method sampled after $15,000$ learning iterations ($30,000$ for \texttt{rings} dataset).
}%
\label{fig:Extended_density_toy}%
\end{figure}

\begin{figure*}[!tb]
\centering
\label{fig:images_2d_ed}
\def\ic#1{\includegraphics[clip, width=0.32\linewidth]{ema_Energy_Distance_#1.png}}
\ic{swissroll}
\ic{moons}
\ic{8gaussians}
\ic{circles}
\ic{2spirals}
\ic{checkerboard}
\ic{pinwheel}
\ic{rings}
\caption{Energy distance comparison for ExFM, CFM and OT-CFM methods for toy datasets for $15\,000$ learning steps, $30\,000$ learning steps for \texttt{rings} dataset.}
\end{figure*}

\begin{figure*}[!tb]
\centering
\label{fig:images_2d_loss}
\def\ic#1{\includegraphics[clip, width=0.32\linewidth]{_Training_loss_#1.png}}
\ic{swissroll}
\ic{moons}
\ic{8gaussians}
\ic{circles}
\ic{2spirals}
\ic{checkerboard}
\ic{pinwheel}
\ic{rings}
\caption{Training loss comparison for ExFM, CFM and OT-CFM methods for toy datasets for $15\,000$ learning steps, $30\,000$ learning steps for \texttt{rings} dataset.}
\end{figure*}

\subsection{Tabular}
The \texttt{power} dataset (dimension = 6, train size = 1659917, test size = 204928) consisted of electric power consumption data from households over a period of 47 months. The \texttt{gas} dataset (dimension = 8, train size = 852174, test size = 105206) recorded readings from 16 chemical sensors exposed to gas mixtures. The \texttt{hepmass} dataset (dimension = 21, train size = 315123, test size = 174987) described Monte Carlo simulations for high energy physics experiments. The \texttt{minibone} (dimension = 43, train size = 29556, test size = 3648) dataset contained examples of electron neutrino and muon neutrino. Furthermore, we utilized the \texttt{BSDS300} dataset (dimension = 63, train size = 1000000, test size = 250000), which involved extracting random 8 x 8 monochrome patches from the BSDS300 datasets of natural images~\cite{937655}. 

These diverse multivariate datasets are selected to provide a comprehensive evaluation of performance across various domains. To maintain consistency, we followed the code available at the given GitHub link\footnote{\url{https://github.com/gpapamak/maf}} to ensure that the same instances and covariates were used for all the datasets.

To ensure the correctness of the experiments we conduct them with the same parameters. To train the model we use the same MultiLayer Perceptron model with ReLu activations, number of neurons and layers differed for the datasets along with the learning rate for the optimizer, that can be seen in Table~\ref{tab:params_tabular}. Same as for toy data, we use \texttt{Adam} as optimizer, EMA with $0.9$ rate and no learning rate scheduler. As in the pretrained step, we use separately training and testing sets for training the model and calculating metrics. We train the models for $10,000$ learning steps with batch size $N= 5000$ (\texttt{batch\_size}) and mini batches $n = 256$ elements (\texttt{mini\_batch\_size}).

For both 2D-toy and tabular data:
we take \hbox{$m=n$} time variable,
individual value of variable~$t$ corresponds to its pair~$(x_0,\, x_1)$.
The notations~$N$, $n$ and $m$ corresponds to those in Algorithm~\ref{alg:main}.
To perform sampling, we employed the function \texttt{odeint} with \texttt{dopri5} method from the python package \texttt{torchdiffeq} with \texttt{atol} and \texttt{rtol} equal $10^{-5}$.

\begin{table}[!htb]
\centering
\def\g{\bf}
\small
\caption{Learning parameters for Tabular datasets.}
\begin{sc}
\begin{tabular}{lrr}
\toprule
   Data & MLP layers & lr   \\
\midrule
        power & [512, 1024, 2048] & 1e-3 \\ 
        gas & [512, 1024,1024] & 1e-4 \\ 
        hepmass & [512, 1024] & 1e-3 \\ 
        bsds300 & [512, 1024,1024]& 1e-4 \\ 
        miniboone & [512, 1024] & 1e-3 \\ 
        
\bottomrule
\end{tabular}
\end{sc}
\label{tab:params_tabular}
\end{table}
Due to the inherent difficulty in visualizing tabular datasets, Negative Log-Likelihood (NLL) (Table~\ref{tab:CFM_ExFM_tabular}) metrics were employed to quantitatively compare the performance of ExFM, CFM, and OT-CFM methods. Figure \ref{fig:images_tabular_loss} presents a comparison of the training losses incurred by each method. As can be observed, all three methods exhibit rapid convergence. Notably, our proposed method demonstrates superior training stability compared to the baseline CFM, as evidenced by its smoother loss curve.

In Figure \ref{fig:images_tabular}, we illustrate the NLL values recorded across training steps for all three methods on various datasets. While our method achieves competitive performance, it occasionally yields slightly lower NLL scores compared to OT-CFM on specific datasets.

\begin{figure*}[!tb]
\centering
\label{fig:images_tabular_loss}
\def\ic#1{\includegraphics[clip, width=0.32\linewidth]{_Training_loss_#1.png}}
\ic{power}
\ic{gas}
\ic{hepmass}
\ic{bsds300}
\ic{miniboone}
\caption{Training loss comparison for tabular datasets for ExFM, CFM and OT-CFM methods over $10\,000$ learning steps.}

\end{figure*}

\begin{figure*}[!tb]
\centering
\label{fig:images_tabular}
\def\ic#1{\includegraphics[clip, width=0.32\linewidth]{ema_NLL_#1.png}}
\ic{power}
\ic{gas}
\ic{hepmass}
\ic{bsds300}
\ic{miniboone}
\caption{NLL comparison for ExFM, CFM and OT-CFM methods over $10\,000$ learning steps,
mean and std for range taken from 10 sampling iterations.}

\end{figure*}

\subsection{ExFM-S evaluation}
The models were assessed using four toy datasets of two dimensions each. A three-layer MLP network was utilized, featuring SeLU activations and a hidden dimension of 64. Optimization was carried out using the AdamW optimizer with a learning rate of $10^{-3}$ and a weight decay of $10^{-5}$. The model was trained over $2\,000$ iterations with a batch size of $128$. Inference was conducted using the Euler solver for Ordinary Differential Equations (ODE) with $100$ steps. 
To validate the models, the POT library was employed to compute the Wasserstein distance based on $4\,000$ samples. The experiments were performed on a single Nvidia H100 GPU with 80gb memory.

\subsection{CIFAR 10 and MNIST}
We conducted experiments related to high dimensional data, the parameters for training were taken from the open-source code\footnote{\url{https://github.com/atong01/conditional-flow-matching}}
from the works~\cite{Tong2024,tong2024improving}. For training we used proposed U-Net model. We saved the leverage of additional heuristics(EMA, lr scheduler) and their parameters. For the final evaluation of CIFAR 10 dataset we used Fréchet inception distance (FID) metrics, and the values can be seen in Table~\ref{tab:fid_cifar10}, and we also evaluated FID during training for different learning steps, that can be seen in Table~\ref{tab:fid_comp_cifar10} and in Figure~\ref{fig:FID}. 

\begin{table}[!htb]
\centering
\def\g{\bf}
\small
\caption{FID comparison for 4 sampling iterations, 400 000 learning steps.}
\begin{sc}
\begin{tabular}{lr}
\toprule
   Method & FID \\
\midrule
        ExFM & \g 3.686 $\pm$ 0.029 \\ 
        CFM & 3.727 $\pm$ 0.026 \\ 
        OT-CFM & 3.843 $\pm$ 0.033 \\ 
        
\bottomrule
\end{tabular}
\end{sc}
\label{tab:fid_cifar10}
\end{table}

\begin{table}[!htb]
    \centering
    \def\g{\bf}
    \caption{FID comparison for ExFM, CFM and OT-CFM methods over 400 000 learning steps, mean and std taken from 4 sampling iterations.}
    \begin{tabular}{lrrr}
    \toprule
        Step & ExFM FID & CFM FID & OT-CFM FID \\ 
    \midrule
        0 & 447.256 $\pm$ 0.116 & 447.106 $\pm$ 0.130 & 447.091 $\pm$ 0.081 \\ 
        20000 & 281.060 $\pm$ 0.243 & 275.044 $\pm$ 0.123 & 281.499 $\pm$ 0.287 \\ 
        40000 & 52.050 $\pm$ 0.245 & 51.436 $\pm$ 0.142 & 45.976 $\pm$ 0.109 \\ 
        60000 & \g 9.125 $\pm$ 0.060 & 9.181 $\pm$ 0.035 & 10.358 $\pm$ 0.054 \\ 
        80000 & \g 6.624 $\pm$ 0.053 & 6.978 $\pm$ 0.062 & 7.492 $\pm$ 0.050 \\ 
        100000 & \g 5.641 $\pm$ 0.048 & 5.894 $\pm$ 0.045 & 6.299 $\pm$ 0.031 \\ 
        120000 & \g 5.085 $\pm$ 0.031 & 5.247 $\pm$ 0.051 & 5.558 $\pm$ 0.017 \\ 
        140000 & \g 4.766 $\pm$ 0.036 & 4.902 $\pm$ 0.053 & 5.120 $\pm$ 0.043 \\ 
        160000 & \g 4.486 $\pm$ 0.054 & 4.593 $\pm$ 0.068 & 4.828 $\pm$ 0.046 \\ 
        180000 & \g 4.294 $\pm$ 0.023 & 4.447 $\pm$ 0.045 & 4.576 $\pm$ 0.051 \\ 
        200000 & \g 4.180 $\pm$ 0.029 & 4.204 $\pm$ 0.013 & 4.434 $\pm$ 0.031 \\ 
        220000 & \g 4.022 $\pm$ 0.036 & 4.182 $\pm$ 0.024 & 4.331 $\pm$ 0.036 \\ 
        240000 & \g 3.925 $\pm$ 0.028 & 4.037 $\pm$ 0.036 & 4.227 $\pm$ 0.050 \\ 
        260000 & \g 3.852 $\pm$ 0.047 & 3.937 $\pm$ 0.018 & 4.125 $\pm$ 0.015 \\ 
        280000 & \g 3.842 $\pm$ 0.053 & 3.870 $\pm$ 0.040 & 4.056 $\pm$ 0.029 \\ 
        300000 & \g 3.758 $\pm$ 0.032 & 3.788 $\pm$ 0.024 & 4.017 $\pm$ 0.029 \\ 
        320000 & \g 3.749 $\pm$ 0.029 & 3.792 $\pm$ 0.034 & 3.937 $\pm$ 0.052 \\ 
        340000 & \g 3.724 $\pm$ 0.042 & 3.747 $\pm$ 0.033 & 3.897 $\pm$ 0.037 \\ 
        360000 & \g 3.714 $\pm$ 0.022 & 3.751 $\pm$ 0.041 & 3.875 $\pm$ 0.015 \\ 
        380000 & \g 3.707 $\pm$ 0.028 & 3.754 $\pm$ 0.020 & 3.917 $\pm$ 0.037 \\ 
        400000 & \g 3.686 $\pm$ 0.029 & 3.727 $\pm$ 0.026 & 3.843 $\pm$ 0.033 \\  

    \bottomrule
    \end{tabular}
    
    \label{tab:fid_comp_cifar10}
\end{table}

Our proposed method demonstrates competitive performance on the evaluated datasets. Notably, it consistently achieves slightly better results compared to OT-CFM. This observation aligns with the assumption that highlight the limitations of OT when dealing with high-dimensional data. Figures \ref{fig:Training_loss_CIFAR} and \ref{fig:Training_loss_MNIST} illustrate the training loss curves for CIFAR 10 and MNIST, respectively. As evident from the figures, our method exhibits a clear advantage in terms of achieving lower training losses throughout the training process. This suggests that our method converges more effectively and is potentially more stable compared to OT-CFM and CFM. Visuals of generated samples for CIFAR 10 dataset are included in Figure \ref{fig:CIFAR10} and for MNIST dataset in Figure \ref{fig:MNIST}.

\begin{figure*}[!tb]
\centering
\label{fig:Training_loss_CIFAR}
\includegraphics[clip, width=1\linewidth]{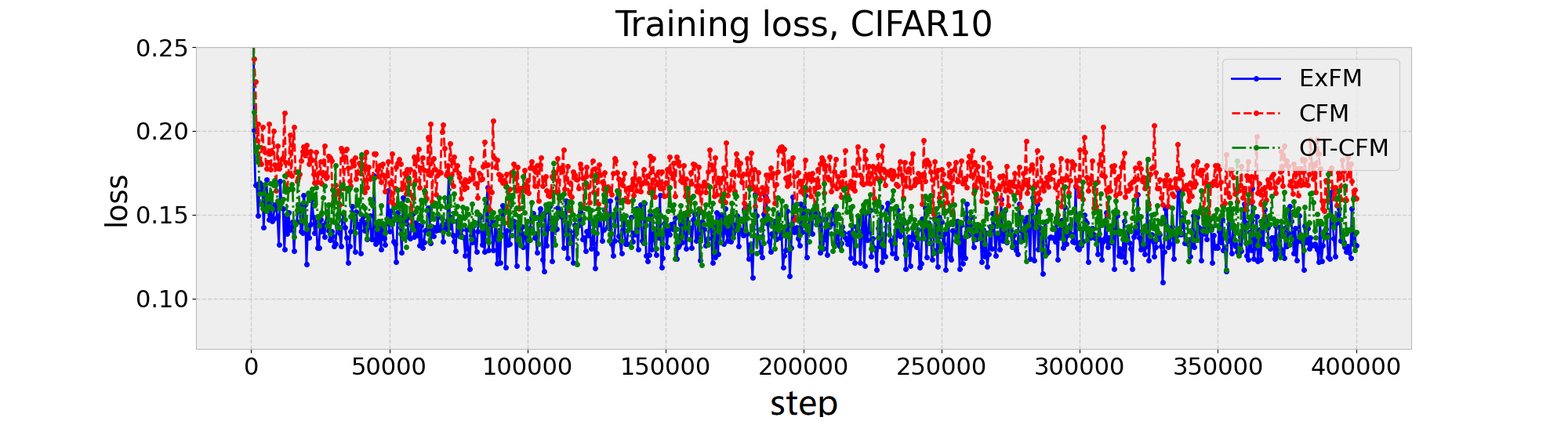}
\vspace*{-2ex}
\caption{Training loss comparison for ExFM, CFM and OT-CFM methods, CIFAR-10 dataset.}

\end{figure*}

\begin{figure*}[!tb]
\centering
\label{fig:Training_loss_MNIST}
\includegraphics[clip, width=1\linewidth]{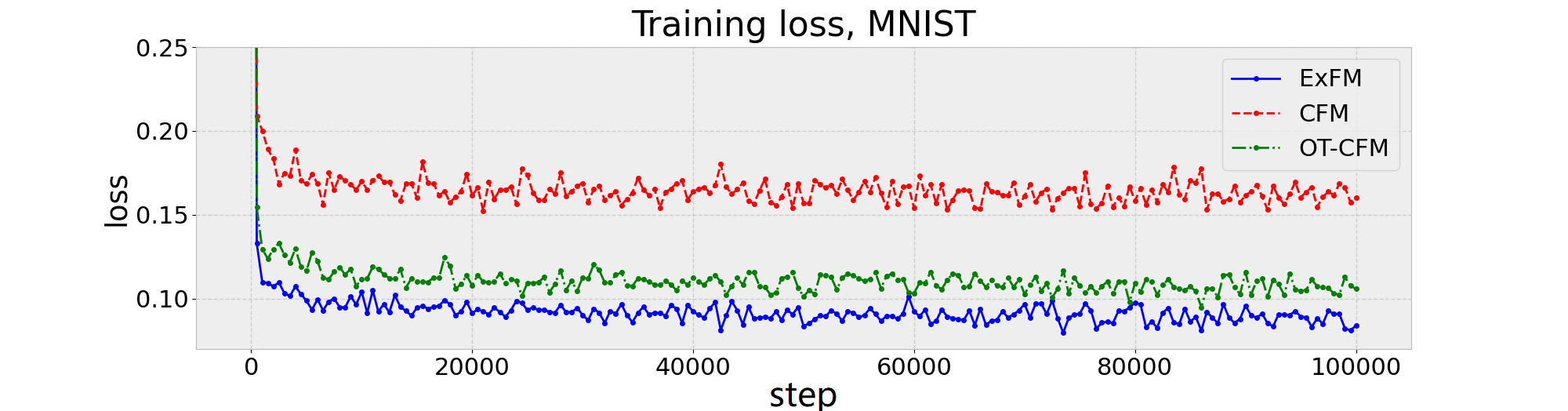}
\vspace*{-2ex}
\caption{Training loss comparison for ExFM, CFM and OT-CFM methods, MNIST dataset.}

\end{figure*}

\begin{figure*}[!tb]
\centering
\label{fig:FID}
\includegraphics[width=1\linewidth]{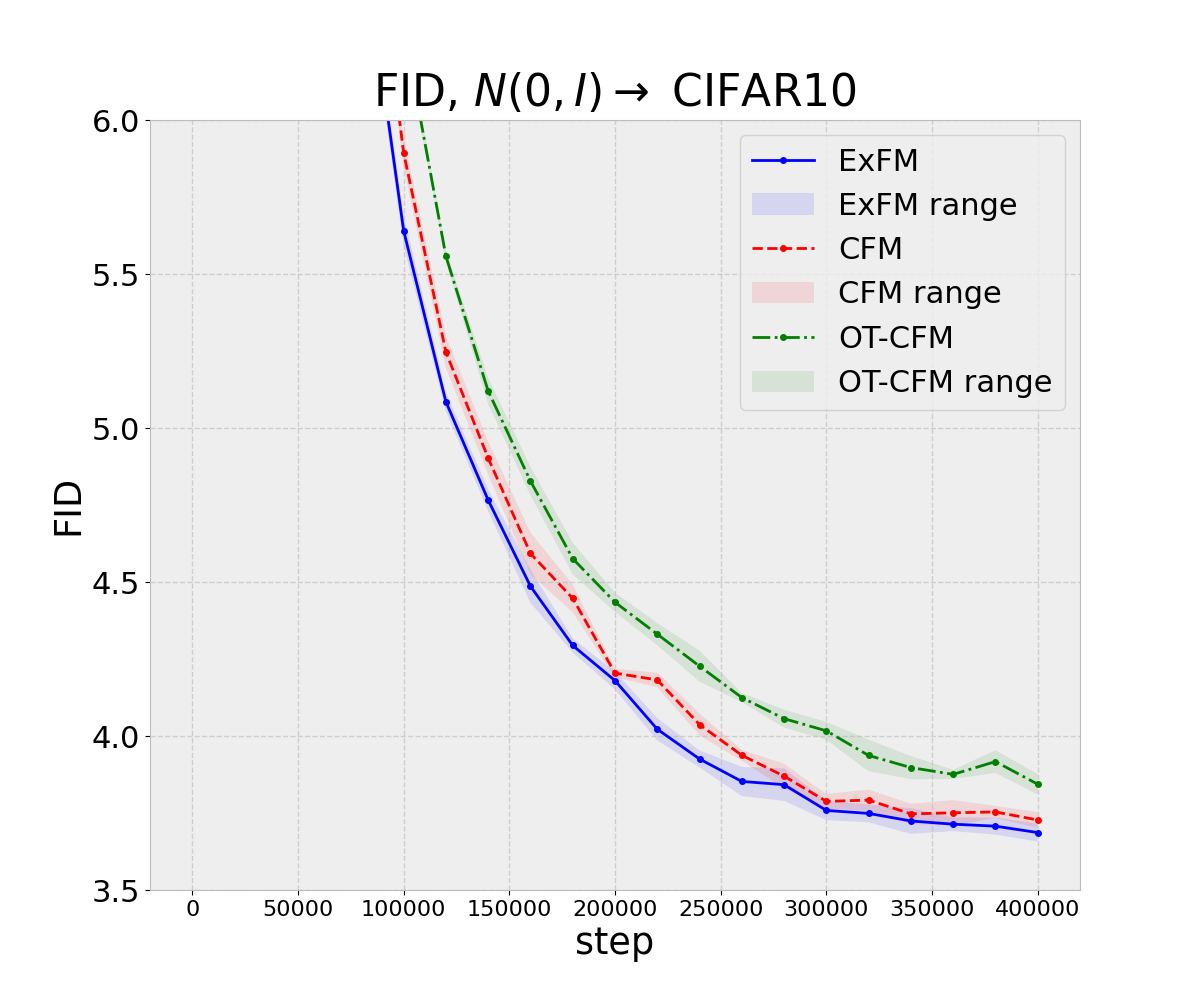}
\vspace*{-2ex}
\caption{FID comparison for ExFM, CFM and OT-CFM methods, CIFAR-10 dataset.}

\end{figure*}

\begin{figure*}[!tb]
\centering
\label{fig:CIFAR10}
\includegraphics[width=1\linewidth]{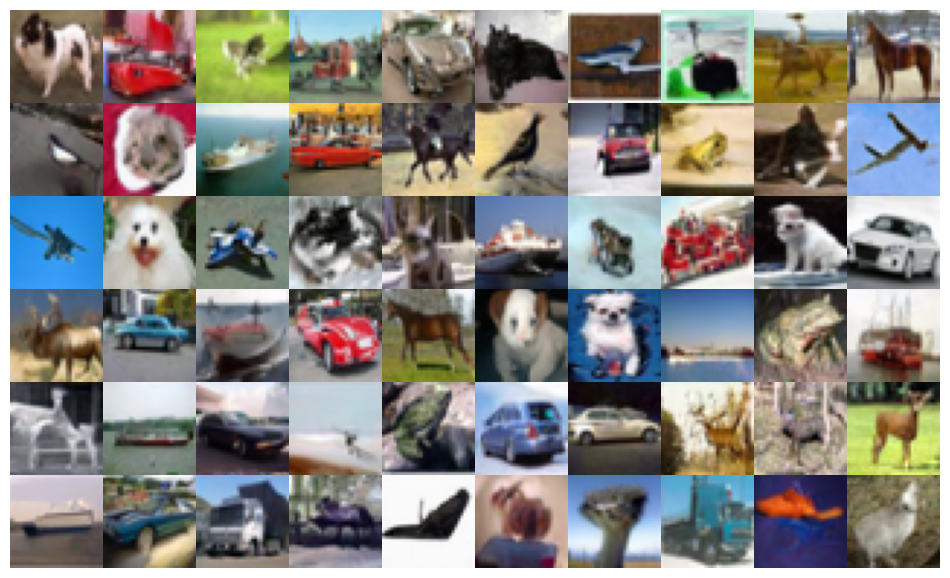}
\vspace*{-2ex}
\caption{Sampled images from ExFM method, CIFAR-10 dataset.}

\end{figure*}

\begin{figure*}[!tb]
\centering
\label{fig:MNIST}
\includegraphics[width=1\linewidth]{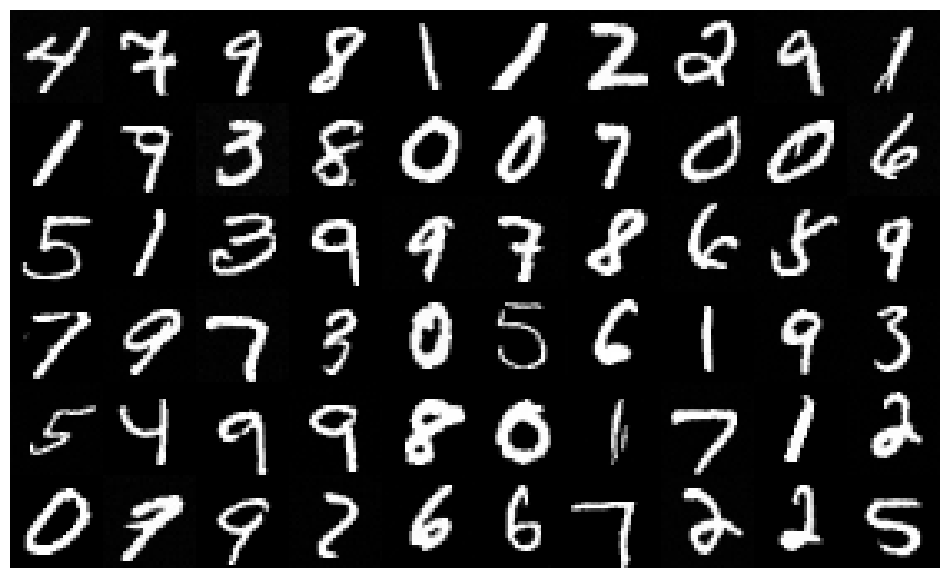}
\vspace*{-2ex}
\caption{Sampled images from ExFM method, MNIST dataset.}

\end{figure*}

\subsection{Metrics}
For evaluating 2D toy data we use Energy Distance and W2 metrices, for Tabular datasets we use Negative Log Likelihood, for CIFAR10 we took Fréchet inception distance (FID) metrics. This choice is connected with an instability and poor evaluation quality of Energy Distance metrics and $\mathcal W_2$ among high-dimensional data .

\subsubsection{Energy Distance}
We use the generalized Energy Distance~\cite{szekely2003statistics} (or E-metrics) to the metric space.

Consider the null hypothesis that two random variables, $X$ and $Y$, have the same probability distributions: $\mu = \nu$ . 

For statistical samples from $X $and $Y$:
$$ \{x_1, \dots, x_n\} \quad\text{ and }\quad \{y_1, \dots, y_m\},$$
the following arithmetic averages of distances are computed between the $X$ and the $Y$ samples:
\begin{equation*}
A= \frac{1}{nm} \sum_{i=1}^n \sum_{j=1}^m \| x_i - y_j \|,
\quad
B= \frac{1}{n^2} \sum_{i=1}^n \sum_{j=1}^n \| x_i - x_j \|,
\quad
C= \frac{1}{m^2} \sum_{i=1}^m \sum_{j=1}^m \| y_i - y_j\|
.
\end{equation*}
 
The E-statistic of the underlying null hypothesis is defined as follows:

$$E_{n,m}(X, Y) := 2A - B - C$$

\subsubsection{2-Wasserstein distance ($\mathcal W_2$)}

The 2-Wasserstein distance \cite{ramdas2017wasserstein}, also called the Earth mover’s distance or the optimal transport distance $W$ is a metric to describe the distance between two distributions, representing two different subsets $A$ and $B$. 
For continuous distributions, it is:

$$W:=W(F_A,F_B)=\left(\int_0^1\left|F_A^{-1}(u)-F_B^{-1}(u)\right|^2du\right)^{\frac12},$$

where $F_A$ and $F_B$ are the corresponding cumulative distribution functions and $F_A^{-1}$ and $F_B^{-1}$ the respective quantile functions.

\subsubsection{Negative Log Likelihood (NLL)}
To compute the NLL, we first sampled~$N=5000$ samples~$\{x^s_i\}_{i=1}^N$ from the target distribution. 
Then we solved the following inverse flow ODE:
\begin{equation*}
\left\{
\begin{aligned}
\pdv{x(t)}{t} &=  v_\theta(x(t),t), \\
    x(1) &= x_s \\
\end{aligned}
\right.
\end{equation*}
for~$t$ from $1$ to $0$.
For simplicity, changing time variable $\tau=1-t$
we solve the following ODE:
\begin{equation*}
\left\{
\begin{aligned}
\pdv{x(\tau)}{\tau} &=  -v_\theta(x(\tau),1-\tau), \\
    x(0) &= x_s \\
\end{aligned}
\right.
\end{equation*}
for $\tau$ from $0$ to $1$.
Thus we obtained~$N$ solutions~$\{x^0_i\}_{i=1}^N$
which are expected to be distributed according to the standard normal distribution~$\mathcal N(x \mid 0,I)$.
So we calculate NLL as
$$
\hbox{NLL}=-\frac1N\sum_{i=1}^N \ln \mathcal N(x^0_i \mid 0,I).
$$

\subsubsection{Fréchet inception distance (FID)}
For images evaluation we take Fréchet inception distance (FID) metrics, in particular the implementation from \cite{parmar2021cleanfid}. The main idea of FID metrics is to measure the gap between two data distributions, such as between a training set and samples from a trained model. After resizing the images, and feature extraction, the mean $(\mu,\hat{\mu})$ and covariance matrix $(\Sigma,\hat{\Sigma})$ of the corresponding features are used to compute FID:

$\mathrm{FID}=||\mu-\hat{\mu}||_{2}^{2}+\mathrm{Tr}(\Sigma+\widehat{\Sigma}-2(\Sigma\hat{\Sigma})^{1/2}),$

 where $Tr$ is the trace of the matrix.


\end{document}